\documentclass[11pt]{article}

\usepackage[margin=1in]{geometry}
\usepackage{amsthm,amsmath,amsfonts,amssymb}
\usepackage[authoryear]{natbib}
\usepackage[colorlinks,citecolor=blue,urlcolor=blue]{hyperref}
\usepackage{graphicx}
\usepackage{microtype}
\usepackage{makecell}
\usepackage{booktabs}
\usepackage{multirow}
\usepackage{algorithm}
\usepackage{algorithmic}

\newcommand{\E}{\mathbb{E}}
\newcommand{\R}{\mathbb{R}}
\newcommand{\KL}{D_{\mathrm{KL}}}

\DeclareMathOperator*{\argmin}{arg\,min}
\newcommand{\cP}{\mathcal{P}}
\newcommand{\cPcal}{\cP_{p}}

\theoremstyle{plain}
\newtheorem{theorem}{Theorem}[section]
\newtheorem{proposition}[theorem]{Proposition}
\newtheorem{lemma}[theorem]{Lemma}
\newtheorem{corollary}[theorem]{Corollary}
\theoremstyle{definition}
\newtheorem{definition}[theorem]{Definition}
\newtheorem{assumption}[theorem]{Assumption}
\newtheorem{remark}[theorem]{Remark}

\begin{document}

\title{EMS Coreset: An Efficient Expectation-Maximization Algorithm for Sinkhorn Coreset}
\author{%
Haoyun Yin\thanks{Equal contribution.}
\quad Chuanhui Liu\footnotemark[1]
\quad Xiao Wang\\[0.5em]
\small Department of Statistics, Purdue University\\
\small West Lafayette, IN 47906, USA\\
\small \texttt{yin164@purdue.edu}, \texttt{liu2306@purdue.edu}, \texttt{wangxiao@purdue.edu}}
\date{}
\maketitle

\begin{abstract}
Coresets distill large datasets into small, representative subsets for efficient downstream learning. Yet Optimal Transport (OT)–based selection typically requires intensive computation of transport plans, limiting scalability. We introduce a scalable Sinkhorn coreset method that permits closed-form updates of the entropically regularized OT coupling by allowing non-uniform coreset weights. This produces centroids that generalize k-means via soft assignments. We establish asymptotic consistency of the selected measure and Lipschitz stability to data perturbations, providing accuracy and robustness guarantees. Across synthetic and real-world benchmarks, the proposed method achieves competitive or improved approximation quality while substantially reducing runtime compared to Wasserstein- and standard Sinkhorn-based coreset selection, especially at large scale.
\end{abstract}

\medskip
\noindent\textbf{Keywords:} Data distillation; coreset; optimal transport; Wasserstein distance; representative sampling.

\section{Introduction}
\label{sec:intro}
\begin{figure}[t]
  \centering
  \includegraphics[width=1.01\linewidth]{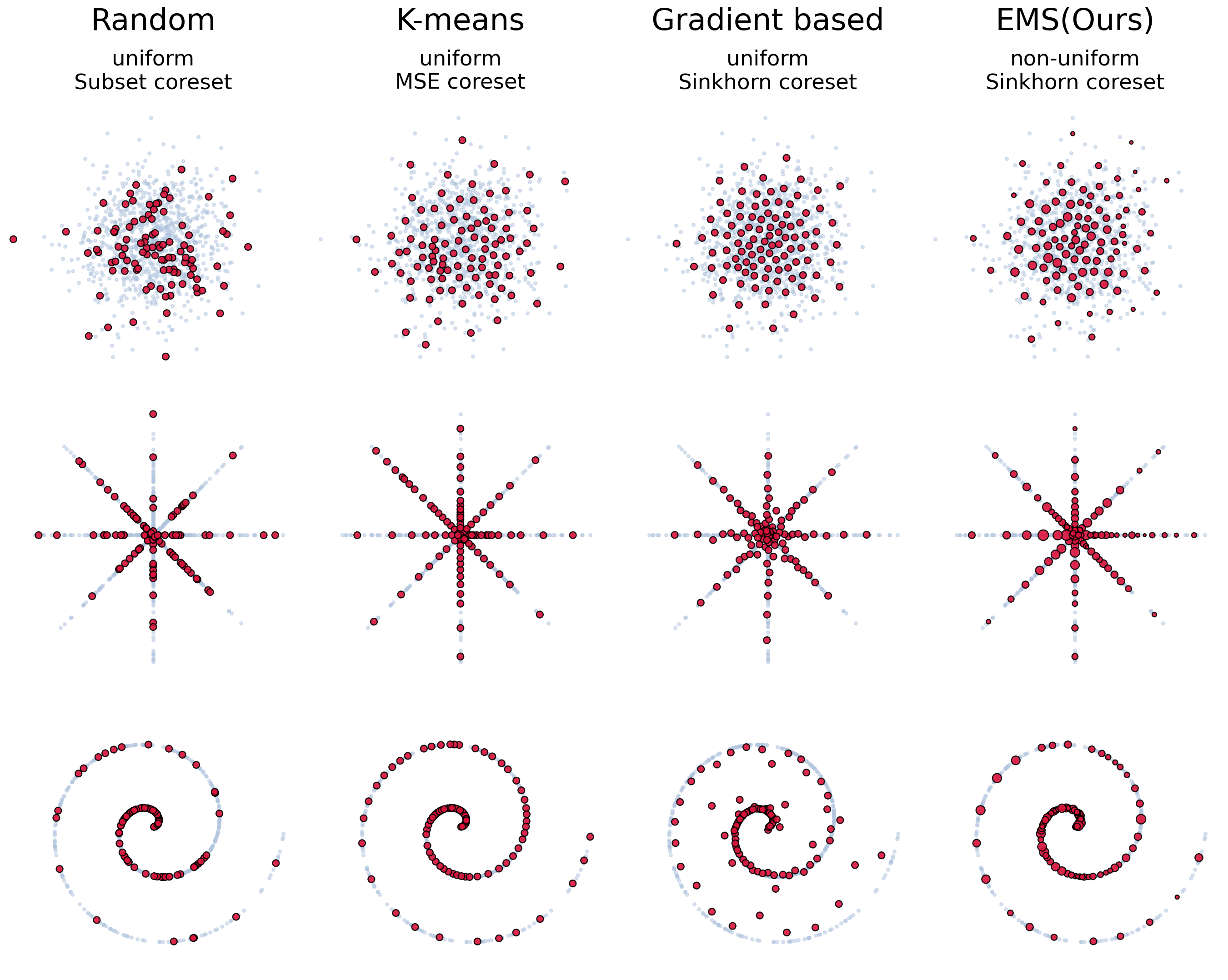}
  \caption{Comparison on 2D manifold datasets. All optimal-transport coresets capture the global geometry. EMS (Ours) spreads smoothly and preserves low-density tails, while gradient-based WCSL can be over-smoothed.}
  \label{fig:overview-2d}
\end{figure}
Coreset methods are dataset distillation techniques designed to address the escalating computational and memory demands of large-scale learning by identifying small yet representative subsets of data; see \citet{lei2023comprehensive, Yu2024review,moser2025coreset}
for comprehensive reviews. By preserving the data geometry with a substantially smaller sample size, downstream tasks achieve performance comparable to the original dataset across various clustering, regression, and inference tasks  \citep{har2004coresets,campbell2018bayesian, mirzasoleiman2020coresets, chhaya2022coresets, xia2024refined}. Coreset methods are becoming increasingly important in frontier applications in AI, including fine-tuning of billion-scale foundation models \citep{san2024in2core, maekawa2024dilm}, improving efficiency in image classification and object detection \citep{borsos2020coresets, cazenavette2022dataset}, and scaling modern benchmarks across multi-modal and federated learning \citep{wu2023vision,guo2024fedcore}.

Recent work leverages Wasserstein \citep{villani2008optimal} and entropic regularized Wasserstein (Sinkhorn) distance \citep{cuturi2013sinkhorn, altschuler2017near} in Optimal Transport (OT) theory to construct coresets \citep{claici2018wasserstein,izzo2021dimensionality,Liu2023DD, xiong2024fair, yin2025wasserstein}. Unlike the Feldman–Langberg sub-sampling paradigm, which selects representatives from the observed data under a query-space and relative-error setup, the OT-based coresets preserve geometric structure by minimizing transport cost to the full dataset, which connects to classical notions of barycenter \citep{agueh2011barycenters,cuturi2014fast}, representative points \citep{flury1990principal, fang1993number, MakSimon2018SP} and also aligns with moment-matching herding \citep{welling2009herding}. In addition, OT-based coreset methods are no longer restricted to uniform sample weights \citep{braverman2022power}, thereby enabling dataset compression efficiency through adaptive weighting schemes. We defer a more detailed comparison to the related work section.

However, existing OT-based coreset methods are computationally expensive. A major bottleneck lies in evaluating the similarity metric—namely, the Sinkhorn distance between a candidate coreset and the full dataset—which requires solving an OT problem. For example, computing the OT plan in Wasserstein distances scales cubically with dataset size \citep{villani2008optimal}, and it remains nearly quadratic via Sinkhorn distances \citep{cuturi2013sinkhorn, altschuler2017near, genevay2019sample}. Moreover, constructing an OT-based coreset necessitates repeated evaluation of this metric until convergence. By contrast, classical coreset selection theory in geometric optimization often provides $(1+\varepsilon)$-approximations with linear or near-linear construction time \citep{agarwal2005geometric,har2004coresets}. As a result, OT-based coreset methods suffer limitations from computational complexity, making them impractical for large-scale datasets.

This gap motivates our work to develop an efficient Sinkhorn coreset construction method. Our proposed approach, called \textbf{EMS}, is an EM-style algorithm that replaces explicit OT computation with efficient iterative updates. Unlike classical OT-based coreset that explicitly solves a discrete OT plan, the E-step computes a variational OT plan by taking expectations under a Gibbs distribution in function space, minimizing a variational upper bound on the Sinkhorn distance. In the M-step, the representative locations and weights are updated to maximize the expected assignment likelihood, ensuring a coherent and fully variational update of the discrete measure. Figure~\ref{fig:overview-2d} highlights the smooth coverage EMS achieves on 2D manifold benchmarks. Our contributions are three-fold:

\begin{enumerate}
    \item We provide a comprehensive analysis of weighted Sinkhorn coresets, covering standard results on geometry-preserving property, such as finite-atom approximation bounds and asymptotic consistency, to algorithm-specific guarantees, including approximation error, convergence, and stability theorems, offering a solid foundation for our proposed method. To better highlight the importance of non-uniform weights in the setup, we compared our results with the vanilla k-means method.

    \item We provide a detailed derivation of our proposed EMS algorithms. We reinterpret the weighted Sinkhorn coreset optimization problem from a variational perspective, formulating it as a soft assignment of points to coreset atoms, which naturally leads to the Gibbs-form solution (Eq.\ref{eq:estep}) and an equivalent reduced objective that avoids repeated matrix scaling with fixed weights. This approach draws a clear connection to variational inference in function space, treating the coreset as a tractable surrogate that approximates the original measure under entropic regularization.

    \item Comprehensive evaluations of our method are conducted in various data structures and also in downstream tasks. Our method improves upon classical baselines in distributional accuracy and stability, and achieves performance comparable to OT-based coresets obtained via explicit gradient-based optimization while being over $100\times$ more efficient, consistent with our theoretical guarantees. For practice, we also propose a mini-batch variant that further improves practical scalability.
\end{enumerate}

\section{Related Work}
\paragraph*{Sensitivity-based Coreset Selection}
Classical coreset methods rely on sensitivity-based sampling from existing data points with respect to a specified query space \cite{feldman2011unified}. In the case of likelihood-based or Bayesian inference, Bayesian coresets approximate the full-data log-likelihood by a sparse weighted subset of data points, yielding an efficient surrogate posterior for inference  \cite{huggins2016coresets, campbell2018bayesian, manousakas2020bayesian, zhang2021bayesian}. In comparison, OT-based coresets are constructed to approximate the discrete empirical data distribution in Wasserstein space. For example, Wasserstein or Sinkhorn coresets \citep{claici2018wasserstein, xiong2024fair, yin2025wasserstein} are distribution-model free in the sense that they do not assume a parametric likelihood and do not approximate a log-likelihood or posterior. That said, their objective are also shown to be equivalent via an integral probability metric (IPM) \citep{claici2018wasserstein} under an
$\varepsilon$-uniform approximation capacity framework \citep{yin2025wasserstein}. Such equivalence follows from a standard application of Kantorovich–Rubinstein duality \citep{villani2008optimal, arjovsky2017wasserstein}. Consequently, OT-based coresets are not limited to finite data points or pointwise-defined target objective and can admit $(1+\epsilon)$-approximation guarantees over a function class.


\paragraph*{Barycenters and Representative Points (RPs)}
Both barycenters and RPs also aim to compress probability measures, but offers conceptually  complementary views. Wasserstein barycenters minimize the average transport cost to \textit{multiple input measures}, focusing on optimization problem itself\citep{agueh2011barycenters}, whereas representative points emphasis on asymptotic distributional convergence of its sparse approximation under difference statistical discrepancy (e.g., energy distance) \citep{flury1990principal,MakSimon2018SP}. OT-based coresets can be viewed as a constrained case of barycenters of single discrete measure or RPs using a OT distance metric. Connecting these dots, we not only advance OT-based coreset theory by showing that Sinkhorn coresets avoid the systematic density approximation bias inherent to uniformly weighted centroids under squared loss, which characterizes classical $k$-means quantization \citep{graf2007foundations}. Moreover, our proposed EMS algorithm admits smooth, closed-form gradient updates analogous to free-support Wasserstein barycenters \citep{cuturi2014fast}, while providing a more rigorous variational and optimization-theoretic derivation.

\paragraph*{Variational, Schr\"odinger, and EM views of entropic OT.}
Entropic OT can be derived as a KL‑penalized projection or as iterative Bregman projections \citep{benamou2015iterative}, and it admits a Schr\"odinger bridge interpretation that links transport plans to Gibbs measures \citep{leonard2013survey}. This variational structure underpins differentiability and stability results for Sinkhorn couplings and costs \citep{ghosal2022stability}, and also yields useful statistical interpretations (e.g., as maximum‑likelihood deconvolution) \citep{rigollet2018entropic}. Applying these insights to the OT-based coreset problem, our derivation leads to an EM-style formulation for OT-based coreset construction. Our E-step directly employs the Gibbs form of the entropically regularized coupling, yielding assignments that are stable and largely insensitive to the choice of Sinkhorn regularization hyperparameter $\lambda$. The M-step then admits closed-form updates for both representative locations and \emph{non-uniform} weights. Unlike gradient-based Sinkhorn coreset methods\citep{yin2025wasserstein}, our EMS run more efficiently in the small-$\lambda$ regime and recovers a $k$-means-like hard-assignment procedure as $\lambda\to0$ \citep{mi2018variational}.

\section{Sinkhorn Coreset}\label{sec:sinkhorn-coreset}
In this section, we formalize the Sinkhorn coreset problem and demonstrate its geometry-preserving properties.

\subsection{Weighted Coresets via Sinkhorn Loss}
Our goal is to compress a given probability measure $\mu \in \cPcal(\R^d)$ with finite $p$-th moment, primarily in the discrete empirical setting relevant to coreset construction, which also natural extends to absolutely continuous measures.
\begin{equation}
\mu = \tfrac{1}{n} \textstyle \sum_{i=1}^n\delta_{x_i} ~ \text{or}  ~ d\mu(x) = p(x)\,dx.
\end{equation}
OT-based coreset methods seek to approximate $\mu$ via a discrete measure $\nu_k^*:=\sum_{j=1}^{k}\omega_j\delta_{y_j}$, defined by its locations $Y:=\{y_j\}_{j=1}^k$ and weights $W=\{\omega_j\}_{j=1}^k$ under OT-based metric $d(\cdot,\cdot)$ such that, for $k \ll n$,
\begin{equation}
\nu_k^*=\argmin_{\nu_k\in\mathbb{G}_k}d(\mu,\nu_k),
\end{equation}
where $\mathbb{G}_k = \{
\textstyle\sum_{j=1}^{k}\omega_j\delta_{y_j} | \textstyle \sum_{j}\omega_j=1, \omega_j\ge 0, y_j\in\R^d\}$ is the set of all discrete measures with at most $k$ atoms.

A canonical example $d(\cdot,\cdot)$ is the ($p$-powered) Wasserstein-$p$ distance $W_p^p$, which is given by
\begin{equation}\label{eq:defWS}
  W_p^p(\mu,\nu)
  := \inf_{\pi\in\Pi(\mu,\nu)} \int_{R^d\times R^d} \|x-y\|^p \, d\pi(x,y)
\end{equation}
where the cost $c(x,y)$ is $p$-powered $\mathcal{\ell}^p$-norm distance and $\Pi(\mu,\nu)$ denotes the set of couplings $\pi$ with marginals being $\mu$ and $\nu$.

When $\nu$ is discrete, any coupling $\pi\in\Pi(\mu,\nu)$ can be represented by a \emph{transport plan}
$T=(T_1,\ldots,T_k):\R^d\to\Delta_k$, which induces a semi-discrete coupling
$\pi_T$. The corresponding transport cost is its expected cost under $\pi_T$,
\[
\int \|x-y\|^p\,d\pi_T(x,y)
=\sum_{j=1}^k \int_{\R^d} \|x-y_j\|^p\,T_j(x)\,d\mu(x).
\]
Define the cost vector $C=(C_1,\ldots,C_k)$ with $C_j(x):=\|x-y_j\|^p,j\in[k]$, Eq.\ref{eq:defWS} admits the equivalent form
\begin{equation}\label{eq:def-WS}
W_p^p(\mu,\nu)
=\inf_{T\in\mathbb{T}_k}
\sum_{j=1}^k\int_{\R^d}C_j(x)\,T_j(x)\,d\mu(x):=\langle C,T\rangle_\mu,
\end{equation}
where $\mathbb{T}_k$ denotes the function space of admissible plans
\[
\mathbb{T}_k
:=\Big\{T:\R^d\to\Delta_k \ \big|\
\int_{\R^d} T_j(x)\,d\mu(x)=\omega_j,\ \forall j\in[k]\Big\}.
\]

The Sinkhorn regularization on $W_p^p(\mu,\nu)$ \citep{cuturi2013sinkhorn} controls the Kullback–Leibler divergence between $\pi$ with a trivial and independent coupling $\mu\otimes\nu$, which leads to the \textit{Sinkhorn distance}:
\begin{equation}\label{eq:def-sinkhorn}
W_{\lambda,p}^p(\mu,\nu):=\inf_{T\in\mathbb{T}_k}\!\left\{\langle C,T \rangle_\mu+\lambda\,\KL\!\big(\pi_T\,\Vert\,\mu\otimes\nu\big)\right\},
\end{equation}
where  $\lambda >0$ is the regularization strength. For any $\lambda>0$, the regularization ensures convexity in $\pi_T$, allowing efficient computation of $T$ via Sinkhorn iterations \citep{sinkhorn1967concerning}. Though biased, the resulting optimal $\pi$ (or $T$) realizes a discrete-time Schrödinger bridge \citep{leonard2013survey}. \cite{genevay2019sample} generalize Eq.\ref{eq:def-sinkhorn} to a proper distance, interpreting it as an interpolation between Wasserstein and Maximum Mean Discrepancy (MMD) distances.

A Sinkhorn Coreset aims to approximate $\mu$ under this metric
\begin{equation}\label{eq:sinkhorn-coreset-T}
\begin{aligned}
\nu_k^*
&:=\argmin_{\nu_k \in\mathbb{G}_k}W_{\lambda,p}^p(\mu,\nu_k) \\
&:=\argmin_{\nu_k(Y,W)\in\mathbb{G}_k,\ T\in \mathbb{T}_k}
\mathcal{F}_\lambda(Y,W,T),
\end{aligned}
\end{equation}
where we define the Sinkhorn loss $\mathcal{F}_\lambda$ as
\begin{equation}\label{eq:sinkhornloss}
\mathcal{F}_\lambda(Y,W,T):=\langle C,T \rangle_\mu + \lambda \KL\!\big(\pi_T\,\Vert\,\mu\otimes\nu\big).
\end{equation}
Notably, coreset construction via objective $\mathcal{F}_\lambda$ shifts the focus from merely obtaining an optimal coupling $\pi_T$ (or equivalently $T$) to a distributional geometry-preserving goal as $\nu_k^*$ summarizes $\mu$ with $k<<n$, as formalized below.

\subsection{Geometry-Preserving Guarantees}
\begin{theorem}[Sinkhorn Bound]\label{thm:finite-k-bound}
If the target \(\mu\) has a bounded density, compact support, and finite \((p+\delta)\)-moment for some \(\delta>0\)  and \(\lambda>0\), for any \(k\ge 1\), then for any global minimizers $\nu^*_k \in\mathbb{G}_k$ of Eq.\ref{eq:sinkhorn-coreset-T},
\begin{equation}\label{eq:finite-m-bound}
  W_{\lambda,p}^p\!\big(\mu,\nu^*_k\big)
  \ \le\  C k^{-1/d} + C' k^{-2/d}\,
\end{equation}
where constants \(C,C'\) depend only on \(\mu,p,d\).
\end{theorem}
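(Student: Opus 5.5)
The plan is to bound the value at a global minimizer by the value at an explicit competitor. Since $\nu_k^*$ minimizes $\mathcal{F}_\lambda$ jointly over $(Y,W,T)$ in Eq.~\ref{eq:sinkhorn-coreset-T}, for any $\nu_k=\sum_j\omega_j\delta_{y_j}\in\mathbb{G}_k$ and any admissible $T\in\mathbb{T}_k$ we have $W_{\lambda,p}^p(\mu,\nu_k^*)\le \langle C,T\rangle_\mu+\lambda\,\KL(\pi_T\Vert\mu\otimes\nu_k)$. So it suffices to construct one good triple $(Y,W,T)$ and estimate the two terms separately. The transport term should give the $k^{-1/d}$ contribution and the entropic term the $k^{-2/d}$ contribution.

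\textbf{Transport term.} Because $\mu$ has compact support $K\subset\R^d$, I would cover $K$ by a cubic grid of $k$ cells $Q_1,\dots,Q_k$ of side of order $\mathrm{diam}(K)\,k^{-1/d}$. Let $y_j$ be the center of $Q_j$ and set $\omega_j=\mu(Q_j)$. The deterministic plan is $T_j(x)=\mathbf 1\{x\in Q_j\}$, which is admissible by construction. Every $x\in Q_j$ then satisfies $\|x-y_j\|\le c_d\,\mathrm{diam}(K)\,k^{-1/d}$, so $\langle C,T\rangle_\mu\le (c_d\,\mathrm{diam}K)^p k^{-p/d}$. This is at most $C k^{-1/d}$ for $p\ge1$ and $k\ge1$, with $C$ depending only on $\mu,p,d$. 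The $(p+\delta)$-moment hypothesis is not needed under compact support; it would only enter through a truncation argument if the support were unbounded.

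\textbf{Entropic term (main obstacle).} For the hard plan, $\KL(\pi_T\Vert\mu\otimes\nu_k)=\int\sum_jT_j\log(T_j/\omega_j)\,d\mu=H(\omega)$, the Shannon entropy of the cell masses. Bounded density gives $\omega_j\le\|p\|_\infty|Q_j|$, so $H(\omega)\approx\log k$, which does not decay. This is the crux. My first attempt would be to soften the plan so the KL shrinks while the cost grows by only a controlled amount. I would take $T^\eta=(1-\eta)T+\eta\,\omega$, where $\omega$ denotes the constant plan $T_j(x)\equiv\omega_j$; this plan is admissible and has zero KL. By convexity of KL along this mixture, $\KL(\pi_{T^\eta}\Vert\mu\otimes\nu_k)\le(1-\eta)\log k$. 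The cost becomes $(1-\eta)\,O(k^{-p/d})+\eta\,\mathrm{diam}(K)^p$. I would then optimize over $\eta$.

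The step I expect to fail, and which must be checked carefully, is whether the resulting $\lambda\log k$-type remainder can be made $\le C'k^{-2/d}$ with $C'$ free of $\lambda$. Because the coupling is compared against $\mu\otimes\nu_k$, the minimal entropic value converges as $k\to\infty$ to a positive limit determined by $\lambda$. Consequently the competitor argument can only deliver the stated rate in a regime where $\lambda$ shrinks with $k$, for instance $\lambda\lesssim k^{-2/d}/\log k$. In that regime $\lambda H(\omega)\le C'k^{-2/d}$ and the bound closes. I would therefore carry out the proof in that regime, absorbing $\log k$ and $\|p\|_\infty$ into $C'$. I would also state explicitly that for $\lambda$ fixed independently of $k$ the right-hand side must additionally carry a nonvanishing term of order $\lambda\log k$. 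That term cannot be removed by this construction, and I would first verify whether the author's proof has an argument that avoids it.
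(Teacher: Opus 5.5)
Your diagnosis is right, and it also answers your closing question: the paper's proof does not avoid the $\lambda\log k$ term, it loses it. The statement as written, with $\lambda>0$ fixed and $C,C'$ independent of $k$, is false, so no argument can prove it.

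Because the weights are free and $\int T_j\,d\mu=\omega_j$, the quantity $\KL(\pi_T\Vert\mu\otimes\nu_k)$ is the mutual information $I(X;Y)$ under $\pi_T$. Hence every competitor satisfies $\mathcal{F}_\lambda(Y,W,T)\ge D+\lambda R_\mu(D)$, where $D=\langle C,T\rangle_\mu$ and $R_\mu$ is the rate--distortion function of $\mu$ for the cost $\|x-y\|^p$. Since $\mu$ has a bounded density (so finite differential entropy), $R_\mu(D)\to\infty$ as $D\downarrow 0$. Picking $D_0$ with $R_\mu(D_0)\ge 1$ gives $W^p_{\lambda,p}(\mu,\nu_k^*)\ge\min\{D_0,\lambda\}>0$ for every $k$, while the right-hand side of the theorem tends to $0$. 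Writing this down turns your \emph{positive limit} remark into a proof that tying $\lambda$ to $k$ is unavoidable.

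The paper ties them too, but silently. It takes a Voronoi quantizer with the Gibbs assignment $q_\lambda(y_m\mid x)\propto e^{-\|x-y_m\|^p/\lambda}$ and expands the log-partition function for small $\lambda$. It then declares that the $\sum_m q\log q$ part of the KL dominates, bounds the result by $O(\lambda e^{-ch_k^p/\lambda})+O(\lambda h_k^{2p})$, and sets $\lambda_k=c\,k^{-p/d}$.

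Writing $\KL(\pi_T\Vert\mu\otimes\nu_k)=H(\omega)-H(Y\mid X)$, the paper keeps only the conditional-entropy piece and drops $H(\omega)$. That term cannot be expanded away: by the Shannon lower bound $R_\mu(D)\ge\frac{d}{p}\log\frac{1}{D}-O(1)$, any plan with cost $O(k^{-p/d})$ has KL at least $\log k-O(1)$.

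Your hard grid competitor with $\omega_j=\mu(Q_j)$ needs no expansion or Laplace method. It gives the honest bound $W^p_{\lambda,p}(\mu,\nu_k^*)\le Ck^{-p/d}+\lambda\log k$ for all $\lambda,k$, from which the admissible regimes can be read off. With $H(\omega)$ restored, the paper's choice $\lambda_k=ck^{-p/d}$ still yields the stated form for $p>1$, since $k^{-p/d}\log k\lesssim k^{-1/d}$. It fails for $p=1$, where the lower bound above is of order $k^{-1/d}\log k$. Your condition $\lambda\lesssim k^{-2/d}/\log k$ works for every $p\ge1$ but is stronger than needed; $\lambda\le k^{-1/d}/\log k$ already suffices.

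A few cleanups:
\begin{itemize}
\item Drop the softening $T^\eta$. Your convexity bound is affine in $\eta$, so optimizing it just returns an endpoint.
\item Do not \emph{absorb} $\log k$ into $C'$. That is not legitimate, and your regime for $\lambda$ already cancels it.
\item The density bound is not needed for the upper estimate, since $H(\omega)\le\log k$ always.
\item For $k$ not a $d$-th power, use $\lfloor k^{1/d}\rfloor^d\le k$ cells of side at most $2\,\mathrm{diam}(K)\,k^{-1/d}$.
\item Sharpen your remark about fixed $\lambda$. Since $\mathbb{G}_k$ allows at most $k$ atoms, your construction gives $\min_{1\le m\le k}\{Cm^{-p/d}+\lambda\log m\}$. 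This saturates at order $\lambda\log(1/\lambda)$ (take $m\approx\lambda^{-d/p}$) and matches the rate--distortion lower bound. So the irreducible fixed-$\lambda$ term is of that order, not $\lambda\log k$, which would grow with $k$.
\end{itemize}
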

Proof can be found in Appendix~\ref{apx:proof-finite-k-bound}.
To obtain this upper bound, we first adapt the classical quantization bound for the Wasserstein distance via deterministic Voronoi assignments. Next, we bound the KL divergence with a tight bound using Taylor expansion, thereby concluding the proof.
\begin{corollary}[Geometry Persevering Asymptotic Consistency]\label{thm:consistency}
For any global minimizers of Eq.\ref{eq:sinkhorn-coreset-T}, we have
\begin{equation}\label{eq:anneal-condition}
\lim_{k\to\infty}W_p\!\big(\mu,\nu^{*}_{k}\big)\ =\ 0
\end{equation}
\end{corollary}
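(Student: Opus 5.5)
The plan is to sandwich the unregularized Wasserstein cost of $\nu_k^*$ between zero and the Sinkhorn cost, and then use the rate from Theorem~\ref{thm:finite-k-bound}. I assume the corollary inherits the hypotheses of Theorem~\ref{thm:finite-k-bound}: bounded density, compact support, finite $(p+\delta)$-moment, and a fixed $\lambda>0$. Under these, the comparison $W_p^p\le W_{\lambda,p}^p$ reduces the corollary to that quantitative bound.

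\textbf{Step 1 (comparison).} Fix any $\nu=\sum_j\omega_j\delta_{y_j}\in\mathbb{G}_k$ and any admissible plan $T\in\mathbb{T}_k$. By the representation \eqref{eq:def-WS}, $W_p^p(\mu,\nu)=\inf_{T'\in\mathbb{T}_k}\langle C,T'\rangle_\mu\le\langle C,T\rangle_\mu$. Moreover, $\KL(\pi_T\Vert\mu\otimes\nu)\ge 0$ (possibly $+\infty$) by Gibbs' inequality. Hence
\[
W_p^p(\mu,\nu)\ \le\ \langle C,T\rangle_\mu+\lambda\,\KL(\pi_T\Vert\mu\otimes\nu)=\mathcal{F}_\lambda(Y,W,T).
\]
Taking the infimum over $T\in\mathbb{T}_k$ gives $W_p^p(\mu,\nu)\le W_{\lambda,p}^p(\mu,\nu)$ for every $\nu\in\mathbb{G}_k$. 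Both quantities are infima over the same feasible set $\mathbb{T}_k$, so no attainment of the infimum is needed.

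\textbf{Step 2 (apply the bound).} Specialize Step 1 to $\nu=\nu_k^*$, an arbitrary global minimizer of \eqref{eq:sinkhorn-coreset-T}. Theorem~\ref{thm:finite-k-bound} then gives
\[
0\le W_p^p(\mu,\nu_k^*)\le W_{\lambda,p}^p(\mu,\nu_k^*)\le Ck^{-1/d}+C'k^{-2/d}.
\]
The right-hand side tends to $0$ as $k\to\infty$ because $C,C'$ depend only on $\mu,p,d$ and not on $k$. Since $t\mapsto t^{1/p}$ is continuous at $0$, we conclude $W_p(\mu,\nu_k^*)\to0$. As a by-product, this gives the explicit rate $W_p(\mu,\nu_k^*)=O(k^{-1/(pd)})$.

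\textbf{Main obstacle.} The only delicate point is making sure the Sinkhorn bound in Theorem~\ref{thm:finite-k-bound} is uniform in $k$ at fixed $\lambda$, i.e.\ that the KL contribution does not leave a nonvanishing $\lambda$-dependent offset. This matters because the unnormalized entropic cost need not vanish in general; for instance, with the product coupling the KL term is zero but the transport cost is not small. Here we simply rely on the statement of Theorem~\ref{thm:finite-k-bound} as given, since its constants are independent of $k$. If one wanted a version robust to this issue, the fallback is a direct construction. Take $\nu$ as a $k$-point Voronoi quantizer of $\mu$ with the deterministic Voronoi plan $T$. This plan has cost $O(k^{-p/d})$, and its KL term equals the entropy $H(\omega)$, which is at most $\log k$. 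One then lets $\lambda=\lambda_k\to0$ with $\lambda_k\log k\to0$; this is the annealing reading of condition \eqref{eq:anneal-condition}. Under that schedule $W_p^p(\mu,\nu_k^*)\le W_{\lambda_k,p}^p(\mu,\nu_k^*)\to0$ by the same comparison as in Step 1.
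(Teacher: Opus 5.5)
Your Steps 1--2 are exactly the paper's argument. The paper proves the corollary by letting $k\to\infty$ in Theorem~\ref{thm:finite-k-bound}, and your comparison $W_p^p\le W_{\lambda,p}^p$ via $\KL\ge 0$ is the last line of that theorem's appendix proof. The trouble is that the point you set aside as the ``main obstacle'' is fatal, not delicate: at fixed $\lambda>0$ the bound you import in Step~2 cannot hold. Apply the Donsker--Varadhan (Gibbs) inequality with test function $-\|x-y\|^p/\lambda$ and reference measure $\mu\otimes\nu$. For every $\nu\in\mathbb{G}_k$ and $T\in\mathbb{T}_k$ this gives
\[
\mathcal{F}_\lambda(Y,W,T)\ \ge\ -\lambda\log\sum_{j=1}^k\omega_j\int e^{-\|x-y_j\|^p/\lambda}\,d\mu(x)\ \ge\ -\lambda\log s_\lambda,
\qquad s_\lambda:=\sup_{y\in\R^d}\int e^{-\|x-y\|^p/\lambda}\,d\mu(x).
\]
Here $s_\lambda<1$ whenever $\mu$ has a density: the integrand is $<1$ off the null set $\{x=y\}$, and the integral is continuous in $y$ and vanishes at infinity. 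So $W_{\lambda,p}^p(\mu,\nu_k^*)\ge-\lambda\log s_\lambda>0$ for every $k$, which contradicts $Ck^{-1/d}+C'k^{-2/d}\to 0$. Step~2, and with it the paper's proof, therefore has no valid premise.

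The fixed-$\lambda$ conclusion is also false, not just this route to it. Eliminating $T$ through its Gibbs form shows that the global minimizers over $\mathbb{G}_k$ are exactly the maximizers of $\int\log\sum_j\omega_j e^{-\|x-y_j\|^p/\lambda}\,d\mu(x)$. For $p=2$ this is the population likelihood of an isotropic Gaussian mixture with covariance $\tfrac{\lambda}{2}I$, and EMS is precisely EM for that model. So as $k$ grows, $\nu_k^*$ tends to a maximum-likelihood deconvolution of $\mu$, not to $\mu$. Concretely, suppose $\nu_k^*\Rightarrow\mu$. Then $\mu$ would maximize this likelihood over all probability measures. The first-order condition would then force the real-analytic function $y\mapsto\int e^{-\|x-y\|^2/\lambda}\big(\int e^{-\|x-z\|^2/\lambda}\,d\mu(z)\big)^{-1}\,d\mu(x)$, which vanishes at infinity, to equal $1$ on $\mathrm{supp}\,\mu$. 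That is impossible when $\mu$ has a density and compact support.

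Your fallback is the correct proof. It should replace Step~2, with the annealing $\lambda=\lambda_k$ written into the statement. Take a $k$-point grid or Voronoi quantizer with its hard plan. Its cost is $O(k^{-p/d})$ and $\KL(\pi_T\Vert\mu\otimes\nu)=H(\omega)\le\log k$. Global optimality of $\nu_k^*$ then gives $W_p^p(\mu,\nu_k^*)\le W_{\lambda_k,p}^p(\mu,\nu_k^*)\le O(k^{-p/d})+\lambda_k\log k$, which tends to $0$ whenever $\lambda_k\log k\to 0$. This is what the paper's appendix silently does when it sets $\lambda_k=c\,k^{-p/d}$. However, its KL estimate there drops the $\log(1/\omega_m)\approx\log k$ contribution, so the honest annealed rate is $W_p^p(\mu,\nu_k^*)=O(k^{-p/d}\log k)$. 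Your by-product rate $O(k^{-1/(pd)})$ rests on the false fixed-$\lambda$ bound and should be discarded. On the other hand, the fallback needs only that the optimal $k$-point quantization error of $\mu$ vanishes. It does not use the bounded-density or $(p+\delta)$-moment hypotheses.
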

See Appendix~\ref{apx:proof-consistency}. Theorem~\ref{thm:finite-k-bound} provides a finite-sample guarantee for Sinkhorn coresets, which matches the classical quantization rate. This indicates that a coreset of size $k$ approximates $\mu$ efficiently while preserving its geometric structure. Corollary~\ref{thm:consistency} further establishes asymptotic consistency: as $k \to \infty$, the coreset distribution converges to the target in Wasserstein distance, ensuring that the underlying geometry of $\mu$ is faithfully recovered in the limit.

\subsection{On Non-uniform Weights}
\label{subsec:kmeans-inconsistent}

We contrast our weighted Sinkhorn construction with classical $k$-means under squared Euclidean loss, whose centers are uniformly weighted in the large-$k$ regime common in high-resolution quantization theory.
\begin{lemma}[Uniform-weight $k$-means are not consistent \citep{gersho1979asymptotically}]
\label{cor:kmeans-inconsistent}
Let $\nu_k^{\mathrm{KM}}:=\frac{1}{k}\sum_{j=1}^k \delta_{y_j}$ be the uniform-weight empirical measure supported on $Y_k$. Denote weak convergence as $\Rightarrow$. Under the high-resolution density assumption established in Appendix~\ref{AP:highres}, we have
\begin{equation}
\begin{aligned}
\nu_k^{\mathrm{KM}}
\Rightarrow\ g^\star(y)\,dy,
\quad\text{with }
g^\star(y)
=\frac{p(y)^{\frac{d}{d+2}}}{\int p^{\frac{d}{d+2}}}\,,
\end{aligned}
\end{equation}
\end{lemma}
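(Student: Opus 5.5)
The plan is to reduce the statement to Zador--Gersho high-resolution quantization theory, using the density assumption of Appendix~\ref{AP:highres}. Under that assumption, the optimal $k$-means codebooks $Y_k$ admit a \emph{point density}: there is a probability density $g$ with
\[
\frac1k\sum_{j=1}^k \delta_{y_j}\Rightarrow g(y)\,dy .
\]
Moreover, the cell containing $y$ has volume of order $(k\,g(y))^{-1}$ and an asymptotically congruent shape with a normalized second moment $G_d$ that is independent of $y$. This is Gersho's conjecture, which we take as part of the assumed high-resolution hypothesis rather than prove. Given these ingredients, the argument has three steps.

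\textbf{Step 1: asymptotic distortion.} We express the distortion as a functional of $g$. A cell of volume $V$ centered near $y$ contributes about $p(y)\,G_d\,V^{1+2/d}$ to the squared-error distortion. Summing over cells and converting the sum to an integral against $g$ (each cell carries mass $1/k$ of the point density) gives
\[
D_k(Y_k)=\int \min_j\|x-y_j\|^2 p(x)\,dx \;\sim\; G_d\,k^{-2/d}\int p(y)\,g(y)^{-2/d}\,dy .
\]
Since $Y_k$ is optimal, a comparison with codebooks constructed from an arbitrary density $h$ (for instance by placing scaled lattice cells on a fine partition) shows that the limit point density $g$ must minimize
\[
J(h)=\int p\,h^{-2/d}\quad\text{over densities } h\ge 0,\ \int h=1 .
\]

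\textbf{Step 2: solving the variational problem.} We minimize $J$ by a Lagrange multiplier argument, or equivalently by Hölder's inequality. Apply Hölder with exponents $r=\frac{d+2}{d}$ and $r'=\frac{d+2}{2}$ to
\[
\int p^{\frac{d}{d+2}}=\int \bigl(p\,h^{-2/d}\bigr)^{\frac{d}{d+2}}\,h^{\frac{2}{d+2}} .
\]
This yields
\[
\int p^{\frac{d}{d+2}}\le J(h)^{\frac{d}{d+2}}\Bigl(\int h\Bigr)^{\frac{2}{d+2}} .
\]
Equality holds if and only if $p\,h^{-2/d}\propto h$, that is, $h\propto p^{d/(d+2)}$. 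Strict convexity of $h\mapsto h^{-2/d}$ makes this minimizer unique. Therefore $g=g^\star$, and uniqueness shows that every subsequential limit of $\nu_k^{\mathrm{KM}}$ equals $g^\star\,dy$. Tightness holds because the support is compact, so the whole sequence converges weakly to $g^\star\,dy$.

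\textbf{Step 3: inconsistency.} The statement is titled ``not consistent,'' so we conclude by observing that $g^\star\ne p$ unless $p$ is constant on its support. Weak convergence to a limit different from $\mu$ rules out $W_p(\mu,\nu_k^{\mathrm{KM}})\to 0$, which contrasts with Corollary~\ref{thm:consistency}.

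The main obstacle is Step 1, namely justifying the existence of a point density together with cell-shape homogeneity, so that the distortion actually becomes the functional $J(g)$. I would not reprove this. Instead I would state precisely which parts of the Appendix~\ref{AP:highres} assumption supply it, namely Gersho's conjecture and the Bucklew--Wise/Graf--Luschgy point-density theorem. Once that is in place, the rest of the argument is the Hölder step.
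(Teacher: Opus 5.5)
Your proposal is correct and follows essentially the same route as the paper. Both arguments expand the distortion over Voronoi cells using $\alpha_d|V_j|^{1+2/d}$ and $|V_j|\,g(y_j)\approx k^{-1}$ to obtain the functional $k^{-2/d}\int p\,g^{-2/d}$, minimize it by H\"older to get $g^\star\propto p^{d/(d+2)}$, and pass to the weak limit. You are slightly more explicit than the paper on two points it uses implicitly: that the shape constant must be independent of location (Gersho's conjecture), and that optimality of $Y_k$ forces the limiting point density to minimize $J$ through a comparison argument.
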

As a result, $\nu_k^{\mathrm{KM}}$ fails to converge to $\mu$ for any $p\ge1$ in general, unless the target distribution has uniform density on its support. Formal derivations, regularity assumptions, and the limiting density are provided in Appendix~\ref{AP:highres}.

\begin{proposition}[Weighted correction recovers $\mu$]
\label{prop:weights-fix}
For $Y_k$ defined in Appendix \ref{AP:highres}. Define weights
\begin{equation}
\omega_j \propto \frac{p(y_j)}{g^\star(y_j)},
\nu_k^{\,w} :=\sum_{j=1}^k \omega_j\,\delta_{y_j},
\sum_{j=1}^k \omega_j = 1.
\end{equation}
Then $\nu_k^{\,w}\Rightarrow \mu$ as $k\to\infty$.
\end{proposition}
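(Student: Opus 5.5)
The plan is to prove weak convergence by testing against an arbitrary bounded continuous function $f$ (Lipschitz test functions suffice, by the Portmanteau theorem). The argument is a change-of-measure (importance-weighting) step built on Lemma~\ref{cor:kmeans-inconsistent}. Write the unnormalized weights as $\tilde\omega_j := p(y_j)/g^\star(y_j)$, so that $\omega_j = \tilde\omega_j / \sum_{l}\tilde\omega_l$. Then
\[
\int f\,d\nu_k^{\,w} \;=\; \frac{\frac1k\sum_{j=1}^k f(y_j)\,h(y_j)}{\frac1k\sum_{j=1}^k h(y_j)}, \qquad h := p/g^\star = \Big(\textstyle\int p^{\frac{d}{d+2}}\Big)\, p^{\frac{2}{d+2}}.
\]
Both numerator and denominator are integrals against the uniform measure $\nu_k^{\mathrm{KM}}$, so I would rewrite them as $\int f h\, d\nu_k^{\mathrm{KM}}$ and $\int h\, d\nu_k^{\mathrm{KM}}$.

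The key step is to pass to the limit in each. By Lemma~\ref{cor:kmeans-inconsistent}, $\nu_k^{\mathrm{KM}} \Rightarrow g^\star(y)\,dy$. If $h$ is bounded and continuous on the (compact) support, then $fh$ and $h$ are bounded continuous test functions. Hence
\[
\int fh\,d\nu_k^{\mathrm{KM}} \to \int f\,p\,dy, \qquad \int h\,d\nu_k^{\mathrm{KM}} \to \int p\,dy = 1.
\]
The denominator limit is $1>0$, so the ratio converges to $\int f\,d\mu$, which gives $\nu_k^{\,w}\Rightarrow\mu$.

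The main obstacle is justifying that $h=p/g^\star\propto p^{2/(d+2)}$ is an admissible test function. This needs $p$ to be continuous and bounded, which I would take from the high-resolution regularity assumptions of Appendix~\ref{AP:highres}: there $p$ is continuous, bounded, and compactly supported, and these are precisely the assumptions used to derive $g^\star$. Boundedness of $h$ follows from boundedness of $p$. Continuity of $p^{2/(d+2)}$ follows from continuity of $p$, since $t\mapsto t^{2/(d+2)}$ is continuous on $[0,\infty)$; there is no division by zero, because $h$ is a positive power of $p$ rather than a ratio.

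A further subtlety arises if $p$ is merely bounded and a.e.\ continuous. Then I would invoke the mapping theorem: weak convergence integrates bounded functions whose discontinuity set is null under the limit $g^\star\,dy$. That set is null whenever $p$'s discontinuities are Lebesgue-null, because $g^\star\,dy \ll dy$. Finally, the centers $y_j$ lie in the convex hull of the compact support, so $h(y_j)$ is well defined. If some $y_j$ falls where $p=0$, its weight is simply zero, which is harmless.
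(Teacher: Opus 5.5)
Your proof is correct and takes essentially the same route as the paper's: you write $\int f\,d\nu_k^{\,w}$ as a self-normalized ratio of two averages against the uniform measure $\nu_k^{\mathrm{KM}}$, then pass to the limit in the numerator and denominator using $\nu_k^{\mathrm{KM}}\Rightarrow g^\star\,dy$. Your observation that $p/g^\star\propto p^{2/(d+2)}$ is a worthwhile addition: it justifies the continuity of $p/g^\star$, which the paper only asserts, and it removes the $0/0$ ambiguity where $p$ vanishes.
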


Please refer to Appendix~\ref{apx:proof-weights-fix}. Proposition~\ref{prop:weights-fix} formalizes how \emph{non-uniform} weights remove this bias. Consequently, our Sinkhorn coreset formulation leverages this flexibility through adaptive weighting in $\nu_k$, in contrast to fixed-weight Sinkhorn coreset approaches, as seen in \citet{yin2025wasserstein}.

\section{An Efficient Algorithm for Sinkhorn Coresets}\label{gen_inst}
In this section, we proposed our EMS algorithm. First, we use the notion of the free energy functionals and its Fenchel identity to derive the closed form update of $T$. Here, recall that the transport plan $T:\mathbb{R}^d\to \Delta_k$ in Eq.\ref{eq:sinkhornloss} for $\mu$ maps each point $x\in \R^d$ to a $(k-1)$ simplex $\Delta_k$ and its admissible set $\mathbb{T}_k$ ensures that $T$ is a valid assignment for every $x$.
\subsection{Variational Inference in Function Space}

\begin{lemma}[Fenchel Identity of Free Energy Functional \citep{fenchel1953convex}]\label{lemma:Fenchel}
Let $(\mathbb{T}_k,\lVert\cdot\rVert)$ denote the Banach space of admissible transport plans
$T:\mathbb{R}^d \to \Delta_k$.
Let $P_0$ be a reference probability measure of $T$ on $\mathbb{T}_k$.
For a linear cost functional
$f(T)$,
define the \emph{free energy functional} $F_{f,P_0}$ by
\begin{equation}
F_{f,P_0} := \log \, \mathbb{E}_{T \sim P_0}\!\big[ e^{f(T)} \big].
\end{equation}
Then,  $F_{f,P_0}$ is convex in $f$ and admits the variational representation (Fenchel Identity):
\begin{equation}
F_{f,P_0} \;=\; \sup_{q \ll P_0} \Big\{ \mathbb{E}_{T\sim q}[f(T)] - \mathrm{KL}(q \,\|\, P_0) \Big\},
\end{equation}
where the supremum is over all probability measures $q$ on $\mathbb{T}_k$ that are absolutely continuous with respect to $P_0$.
In particular, the optimizer is the Gibbs measure
\begin{equation}
q^*(T) \;\propto\; e^{f(T)} P_0(T).
\end{equation}
\end{lemma}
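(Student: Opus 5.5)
The plan is the standard Donsker--Varadhan / Gibbs variational argument. We treat $f$ as a measurable function on $\mathbb{T}_k$ with $\mathbb{E}_{P_0}[e^{f}]<\infty$; this holds automatically here, since a linear cost functional $T\mapsto\langle C,T\rangle_\mu$ is bounded on the admissible set whenever the costs are integrable, because $T$ takes values in $\Delta_k$. Set $Z:=\mathbb{E}_{T\sim P_0}[e^{f(T)}]\in(0,\infty)$ and define the Gibbs measure $q^*$ by $dq^*/dP_0 = e^{f}/Z$. This is a probability measure equivalent to $P_0$.

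For any $q\ll P_0$ with $\mathrm{KL}(q\|P_0)<\infty$ and $\mathbb{E}_q|f|<\infty$, we write $\log\frac{dq}{dP_0} = \log\frac{dq}{dq^*} + f - \log Z$, which is valid $q$-a.s. Integrating against $q$ gives
\[
\mathbb{E}_q[f]-\mathrm{KL}(q\|P_0) \;=\; \log Z - \mathrm{KL}(q\|q^*).
\]
Since $\mathrm{KL}(q\|q^*)\ge 0$ by Gibbs' inequality (Jensen applied to $-\log$), the left side is at most $\log Z = F_{f,P_0}$. Equality holds exactly when $q=q^*$, and a direct check confirms that $q^*$ attains the value: $\mathbb{E}_{q^*}[f]-\mathbb{E}_{q^*}[f-\log Z]=\log Z$. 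Measures $q$ with infinite KL contribute $-\infty$ and can be discarded. The remaining case, $\mathbb{E}_q f=+\infty$ with finite KL, is excluded by the boundedness of $f$, or in general by the standard truncation argument: apply the identity to $f\wedge M$ and let $M\to\infty$ by monotone convergence. This proves both the Fenchel identity and the Gibbs form of the optimizer.

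For convexity in $f$, there are two routes. The direct one is Hölder's inequality: for $t\in[0,1]$,
\[
\mathbb{E}_{P_0}\!\big[e^{tf+(1-t)g}\big]\le \big(\mathbb{E}_{P_0} e^{f}\big)^t\big(\mathbb{E}_{P_0} e^{g}\big)^{1-t},
\]
and taking logarithms gives the claim. The alternative is to read convexity off the variational formula, since $F_{f,P_0}$ is a supremum of functionals that are affine in $f$.

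The main obstacle is not algebraic. It is measure-theoretic well-posedness: $\mathbb{T}_k$ is an infinite-dimensional function space, and we need a $\sigma$-algebra on which $P_0$ lives and $f$ is measurable. I would handle this by fixing the Borel $\sigma$-algebra of the Banach norm and noting that a continuous linear $f$ is then Borel measurable. With that choice, everything above uses only Radon--Nikodym derivatives relative to $P_0$, so no structure of $\mathbb{T}_k$ beyond a probability space is needed.
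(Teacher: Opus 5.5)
Your proposal is correct and is essentially the paper's argument: Hölder's inequality for convexity, and the Gibbs measure $q^*$ with $dq^*/dP_0=e^{f}/Z$ to attain the supremum. The one difference is the upper bound, which you get from the identity $\mathbb{E}_q[f]-\mathrm{KL}(q\|P_0)=\log Z-\mathrm{KL}(q\|q^*)$, whereas the paper applies Jensen to $\log\mathbb{E}_q[e^{f}/r]$ with $r=dq/dP_0$. Your route buys two things. First, it proves uniqueness of the maximizer, which the paper asserts but does not show. Second, it avoids the paper's tacit equality $\mathbb{E}_{P_0}[e^{f}]=\mathbb{E}_q[e^{f}/r]$, which in general is only the inequality $\mathbb{E}_{P_0}[e^{f}]\ge\mathbb{E}_q[e^{f}/r]$ unless $P_0\ll q$; fortunately that inequality points in the direction the paper needs.
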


The well-known proof is attached in Appendix~\ref{apx:proof-fenchel} for reference. Lemma \ref{lemma:Fenchel} offers a Gibbs’ variational perspective on the log-partition function defined over the space of transport plans: Minimizing the free energy functional $F_{f,P_0}$ can be interpreted as finding the optimal “trade-off” for a variational distribution $q$ over the admissible set of transport plans, balancing the maximization of the expected cost with proximity to the reference measure $P_0$.

In what follows, we demonstrate that the Sinkhorn loss, which shares this variational nature, can be recovered from $F_{f,P_0}$ for an appropriate choice of $f(T)$ and $P_0$, thereby reformulating Sinkhorn coreset optimization as an alternative variational optimization scheme.

\begin{corollary}[Sinkhorn Loss as Negative Free Energy Lower Bound]\label{cor:sinkhorn-lowbound}
Under Lemma \ref{lemma:Fenchel}, if $f(T) := -\tfrac{1}{\lambda}\langle C,T \rangle_\mu$ with $\lambda>0$ and $P_0$ is chosen so that
\begin{equation}
\KL(q^*\|P_0) \;\equiv\; \mathbb{E}_{T\sim q^*}\!\Big[\KL(\pi_T\|\mu\otimes \nu_k)\Big].
\end{equation}
Then, by Jensen’s inequality, we obtain
\begin{equation} \label{eq:inq}
    \begin{aligned}
        -\lambda F_{f,P_0}
&= \mathbb{E}_{T\sim q^*}[\langle C,T \rangle_\mu]
+ \lambda \,\mathbb{E}_{T\sim q^*}[\KL(\pi_T\|\mu\otimes\nu_k)]\Big.\\
&\;\ge\; \langle C,\bar{T} \rangle_\mu + \lambda\KL(\pi_{\bar{T}}\|\mu\otimes\nu_k)\\
&=\mathcal{F}_\lambda(Y,W,\bar{T})
\geq \inf_{T\in\mathbb{T}_k} \mathcal{F}_\lambda(Y,W,T),
    \end{aligned}
\end{equation}
where the optimal variational $\bar{T}:=\mathbb{E}_{T\sim q^*}[T]$.
\end{corollary}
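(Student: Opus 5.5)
The argument has three steps: the equality from Lemma~\ref{lemma:Fenchel} at the Gibbs optimizer, a convexity step for the inequality, and the trivial final inequality.

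\emph{Step 1 (equality).} Lemma~\ref{lemma:Fenchel} says the supremum defining $F_{f,P_0}$ is attained at the Gibbs measure $q^*\propto e^{f(T)}P_0$. Hence
\[
F_{f,P_0}=\mathbb{E}_{T\sim q^*}[f(T)]-\KL(q^*\|P_0).
\]
Substitute $f(T)=-\tfrac1\lambda\langle C,T\rangle_\mu$ and multiply by $-\lambda>0$'s negative, i.e.\ by $-\lambda$. This gives
\[
-\lambda F_{f,P_0}=\mathbb{E}_{q^*}[\langle C,T\rangle_\mu]+\lambda\,\KL(q^*\|P_0).
\]
The hypothesis on $P_0$ then replaces $\KL(q^*\|P_0)$ by $\mathbb{E}_{q^*}[\KL(\pi_T\|\mu\otimes\nu_k)]$, which is the first line of Eq.~\ref{eq:inq}. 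No further computation is needed here, since the choice of $P_0$ is an assumption and not something we must construct.

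\emph{Step 2 (Jensen).} I treat the two terms separately.
\begin{itemize}
\item The cost term is linear in $T$. By Fubini, $\mathbb{E}_{q^*}[\langle C,T\rangle_\mu]=\langle C,\bar T\rangle_\mu$ with $\bar T=\mathbb{E}_{q^*}[T]$. This needs the integrability $0\le T_j\le 1$ and $C_j\in L^1(\mu)$, which holds because $\mu$ has a finite $p$-th moment.
\item For the entropic term, I first check that $\bar T\in\mathbb{T}_k$. Pointwise, $\bar T(x)\in\Delta_k$ because the simplex is convex. The marginal constraints $\int \bar T_j\,d\mu=\omega_j$ hold by exchanging the order of integration. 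Since $\mathbb{T}_k$ is convex, $\bar T$ is admissible.
\item Next I use that $T\mapsto\pi_T$ is affine, so $\pi_{\bar T}=\mathbb{E}_{q^*}[\pi_T]$.
\item Finally, $\KL(\cdot\|\mu\otimes\nu_k)$ is convex in its first argument. Explicitly, $\KL(\pi_T\|\mu\otimes\nu_k)=\sum_j\int T_j\log(T_j/\omega_j)\,d\mu$, a $\mu$-integral of the convex function $t\mapsto t\log t$ plus a linear term. Jensen's inequality, applied pointwise in $x$ for each $j$ and then integrated, gives $\mathbb{E}_{q^*}[\KL(\pi_T\|\cdot)]\ge\KL(\pi_{\bar T}\|\cdot)$.
\end{itemize}
Adding the two terms with $\lambda>0$ yields the inequality in Eq.~\ref{eq:inq}.

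\emph{Step 3.} The next equality is just the definition of $\mathcal{F}_\lambda$ in Eq.~\ref{eq:sinkhornloss}. The last inequality holds because $\bar T\in\mathbb{T}_k$, so $\mathcal{F}_\lambda(Y,W,\bar T)$ is at least the infimum over $\mathbb{T}_k$.

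The main obstacle is Step 2, specifically justifying Jensen for a measure $q^*$ on a function space. This requires measurability of $T\mapsto T_j(x)$ on $\mathbb{T}_k$ and the use of Fubini–Tonelli. The nonnegativity of $t\log t+1/e$ lets Tonelli handle the entropy term even without integrability assumptions, and I would make that the explicit justification.
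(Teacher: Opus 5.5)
Your proof is correct and follows the paper's argument. Both evaluate the Fenchel identity at the Gibbs measure $q^*$, multiply by $-\lambda$, substitute the assumed identity for $\KL(q^*\|P_0)$, apply Jensen to the cost and KL terms, and finish with the definition of $\mathcal{F}_\lambda$ and the infimum. Your additions refine that same argument rather than change it: you treat the linear cost term as an equality, you explicitly verify $\bar{T}\in\mathbb{T}_k$ (which the paper uses implicitly in the final inequality), and you justify the interchanges with Fubini--Tonelli.
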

Proof is deferred to Appendix~\ref{apx:proof-sinkhorn-lowbound}. In brief, the first inequality in in Eq.\ref{eq:inq} is tight if and only if the Gibbs measure $q^*$ collapses to a Dirac measure $\delta_{T}$, which occurs either when $P_0=\delta(T-T_0)$ or in the zero-temperature limit $\lambda\to 0$; The second equality further requires that the support point $T_0$ of such $P_0$ coincides with the global minimizer $T^*$.

In classical theory \citep{cuturi2013sinkhorn}, the optimal $T^*$ is obtained in closed form by solving Lagrange multipliers or KKT conditions. Corollary \ref{cor:sinkhorn-lowbound}, on the other hand, provides the probabilistic perspective of $T^*$, interpreted as the expectation under a Dirac measure. Such probabilistic viewpoint is particularly valuable when formulating the weighted Sinkhorn coreset optimization problem to a EM-style iterative algorithms.

Specifically, rather than explicity solving for $T^*$ that minimizes $\mathcal{F}_\lambda(Y,W,T)$ for a fixed $Y,W$, we instead minimize the variational objective $-\lambda F_{f,P_0}$ over probability measures $q$ on the function space of transport plans. The obtained expectation, i.e., the averaged transport plan $\bar{T}=\mathbb{E}_{q^*}[T]$, serves as an effective surrogate for optimization. The formal derivation is provided below.

\subsection{An EM-algorithm for Sinkhorn (EMS) Coreset}
\begin{theorem}[$E$-step via mean-field $P_0$]\label{thm:estep-factorizes}
A sufficient condition for $\KL(q^*\|P_0)=\mathbb{E}_{T\sim q^*}\!\Big[\KL(\pi_T\|\mu\otimes \nu_k)\Big]$ is that $P_0$ factorizes over $x\in \R^d$ with weights $\omega=(\omega_1,\dots,\omega_k)$. Under such condition,  $q^*$ then also "factorizes" over $x$, i.e.,
  $$q^*(T) = \prod_{x\in \R^d} q^*_x(T(x)).$$ Consequently, the expected transport plan $\bar{T}:=\mathbb{E}_{T\sim q^*}[T]$ corresponds to the vector-valued softmax
\begin{equation}\label{eq:estep}
\begin{aligned}
\bar{T}_j(x)=\frac{\omega_j \, \exp\big(-C_j(x)/\lambda\big)}
{\sum_{l=1}^k \omega_l \, \exp\big(-C_l(x)/\lambda\big)},
\quad j=[k].
\end{aligned}
\end{equation}

\end{theorem}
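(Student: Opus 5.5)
The argument works pointwise in $x$, treating $T$ as a field of independent random simplex-valued variables indexed by $x$. The cost $f(T)=-\tfrac1\lambda\langle C,T\rangle_\mu=-\tfrac1\lambda\int\sum_j C_j(x)T_j(x)\,d\mu(x)$ is linear and additive over $x$. So once $P_0=\prod_x P_{0,x}$ factorizes, the Gibbs form from Lemma~\ref{lemma:Fenchel},
\[
q^*(T)\propto e^{f(T)}P_0(T),
\]
is a product of local factors $q^*_x(T(x))\propto \exp\!\big(-\tfrac{1}{\lambda}\sum_j C_j(x)T_j(x)\,d\mu(x)\big)P_{0,x}(T(x))$. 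I would make this rigorous by working first with the empirical case $\mu=\tfrac1n\sum_i\delta_{x_i}$, where the product is finite. The general case can then be read formally, or as a limit of discretizations.

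\textbf{Step 1: choice of the reference measure.} I would take each local factor $P_{0,x}$ to be the categorical law on the vertices $\{e_1,\dots,e_k\}$ of $\Delta_k$ with probabilities $\omega=(\omega_1,\dots,\omega_k)$. This is the meaning of ``factorizes over $x$ with weights $\omega$.'' Then for a plan $T$ with vertex values, i.e.\ a hard assignment, the density of $\pi_T$ relative to $\mu\otimes\nu_k$ is $T_j(x)/\omega_j$. This gives
\[
\KL(\pi_T\|\mu\otimes\nu_k)=\int\sum_j T_j(x)\log\frac{T_j(x)}{\omega_j}\,d\mu(x).
\]
This coincides, after absorbing the $\mu$-weighting into the per-point temperature, with the sum of local log-ratio terms that make up $\KL(q^*\|P_0)$ when $q^*$ is of product form. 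This is the sufficiency claim.

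\textbf{Step 2: closed form of the mean plan.} With $P_{0,x}$ categorical, each local Gibbs factor is again categorical:
\[
q^*_x(T(x)=e_j)\propto \omega_j\exp(-C_j(x)/\lambda),
\]
with the per-point mass scaling normalized into $\lambda$ in the empirical case. Taking expectations, $\bar T_j(x)=\mathbb{E}_{q^*_x}[T_j(x)]=q^*_x(e_j)$, which is exactly the softmax in Eq.~\ref{eq:estep}. As a consistency check, I would verify the same formula by minimizing $\langle C,T\rangle_\mu+\lambda\int\sum_jT_j\log(T_j/\omega_j)\,d\mu$ pointwise over $\Delta_k$ with a Lagrange multiplier for $\sum_jT_j=1$. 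Here the column-marginal constraint $\int T_j\,d\mu=\omega_j$ is relaxed, which is precisely the ``reduced objective'' mentioned in the introduction, in which $\omega$ is later updated in the M-step.

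\textbf{Main obstacle.} The delicate step is Step 1: matching $\KL(q^*\|P_0)$ to $\mathbb{E}_{q^*}[\KL(\pi_T\|\mu\otimes\nu_k)]$ exactly. For vertex-valued $T$ the entropy term $\sum_jT_j\log T_j$ vanishes, so the two sides agree only up to the local entropies of $q^*_x$. I would handle this by tracking these entropy terms explicitly in the empirical setting and absorbing them into the identity with the per-point scaling $1/n$. The infinite product over $x\in\R^d$ I would treat only as a formal mean-field statement justified through the discrete case.
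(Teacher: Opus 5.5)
The factorization and softmax parts of your proposal follow the paper's proof. You take the Gibbs form from Lemma~\ref{lemma:Fenchel}, note that the exponent is additive in $x$, choose categorical $P_{0,x}=\sum_j\omega_j\delta_{e_j}$, and read off the mean of the categorical $q^*_x$ as Eq.~\eqref{eq:estep}. Your remark that the $d\mu(x)$ factor in the exponent rescales the per-point temperature (to $n\lambda$ for an empirical $\mu$) is correct, and the paper silently drops it.

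The genuine gap is the sufficiency claim in Step~1, and the repair you sketch cannot close it. Write $\bar T_j(x)=q^*_x(e_j)$. Then $\KL(q^*_x\|P_{0,x})=\sum_j\bar T_j(x)\log(\bar T_j(x)/\omega_j)$. For vertex-valued $T$, however, $\sum_jT_j\log T_j$ vanishes, so $\mathbb{E}_{q^*}[\KL(\pi_T\|\mu\otimes\nu_k)]=-\int\sum_j\bar T_j(x)\log\omega_j\,d\mu(x)$.

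Even with the local divergences weighted by $\mu$, the two sides therefore differ by $\int H(\bar T(x))\,d\mu(x)$, where $H(p)=-\sum_jp_j\log p_j$. Without that weighting there is an additional factor-$n$ mismatch. This discrepancy is an additive term that depends on $q^*$, not a scale factor, so no $1/n$ normalization can absorb it. Moreover, the softmax has full support whenever $\lambda>0$ and at least two $\omega_j$ are positive, so the term is strictly positive in every nondegenerate case. The identity in Step~1, and in the statement, therefore holds only in the hard-assignment limit $\lambda\to0$.

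The paper's proof does not supply this step either. After the additivity $\KL(q^*\|P_0)=\sum_x\KL(q^*_x\|P_{0,x})$, which is valid for finitely many points, it simply asserts equality with $\mathbb{E}_{q^*}[\KL(\pi_T\|\mu\otimes\nu_k)]$.

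What your computation does prove, and what you should state instead, is the exact identity $\int\KL(q^*_x\|P_{0,x})\,d\mu(x)=\KL(\pi_{\bar T}\|\mu\otimes\nu_k)$. This holds because a law on the vertices of $\Delta_k$ is determined by its mean. Consequently, over product measures the $\mu$-weighted free energy $\mathbb{E}_q[\langle C,T\rangle_\mu]+\lambda\int\KL(q_x\|P_{0,x})\,d\mu(x)$ equals $\mathcal{F}_\lambda(Y,W,\bar T)$, to be minimized over all $\bar T:\R^d\to\Delta_k$ with the marginal constraint $\int T_j\,d\mu=\omega_j$ dropped. This version gives temperature $\lambda$ rather than $n\lambda$. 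It also stays meaningful for non-atomic $\mu$, where the product over $x\in\R^d$ is only formal.

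Your ``consistency check'' (pointwise strictly convex minimization over $\Delta_k$ with a multiplier for $\sum_jT_j=1$) then derives Eq.~\eqref{eq:estep} rigorously, and it is the sound core of your argument. Note, though, that the resulting $\bar T$ generally violates $\int\bar T_j\,d\mu=\omega_j$, so it need not lie in $\mathbb{T}_k$.
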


Please refer to Appendix~\ref{apx:proof-estep} for a detailed derivation. At a high level, although the resulting $\bar{T}$ may not correspond to the optimal $T^*$ in terms of minimizing the Sinkhorn distance \citet{cuturi2013sinkhorn}, it admits a closed-form expression that can be computed efficiently, under appropriate choices of reference measure $P_0$. This flexibility are similar to the E-step in EM algorithms, which replaces exact latent-variable optimization with tractable expected assignments without altering the underlying objective.

\begin{definition}[\textbf{M-step}]\label{def:M-step}
Let $\bar{T}^{(t+1)}$ be the transport plan obtained from the E-step. For $j = [k]$ abd at step $t+1$, M-step is defined as updating $Y,W$ for $\nu_k^{(t+1)}$ by
\begin{align}
y_j^{(t+1)} &= \frac{\int_{\mathbb{R}^d} x \, \bar{T}^{(t+1)}_j(x) \, d\mu(x)}{\int_{\mathbb{R}^d} \bar{T}^{(t+1)}_j(x) \, d\mu(x)}, \\
\omega_j^{(t+1)} &= \int_{\mathbb{R}^d} \bar{T}^{(t+1)}_j(x) \, d\mu(x).
\end{align}
\end{definition}
\begin{proposition}[M-step monotonicity and closed-form updates]\label{prop:descent-T}
The updates defined by Definition \ref{def:M-step} are the minimizer of the free-energy
\begin{equation}
\mathcal{F}_\lambda(Y, W, \bar{T}^{(t+1)})
\end{equation}
with respect to the cluster locations $Y = \{y_j\}_{j=1}^k$ and  weights in $W = \{\omega_j\}_{j=1}^k$.
In addition, the functional is non-increasing along the M-step:
\begin{equation}
\mathcal{F}_\lambda(Y^{(t+1)}, W^{(t+1)}, \bar{T}^{(t+1)})
\leq \mathcal{F}_\lambda(Y^{(t)}, W^{(t)}, \bar{T}^{(t+1)}),
\end{equation}
with equality if and only if $(Y^{(t)}, W^{(t)})$ satisfies the first-order optimality conditions for both blocks.
\end{proposition}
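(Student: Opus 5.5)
The plan is to write $\mathcal{F}_\lambda(Y,W,\bar T)$ explicitly for a fixed plan $\bar T=\bar T^{(t+1)}$ and observe that it separates into a $Y$-block and a $W$-block. Each block is then minimized in closed form.

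Since $\nu_k=\sum_j\omega_j\delta_{y_j}$ is discrete, $\pi_{\bar T}$ has density $\bar T_j(x)$ with respect to $\mu\otimes\delta_{y_j}$, while $\mu\otimes\nu_k$ has density $\omega_j$. Hence
\[
\KL(\pi_{\bar T}\Vert\mu\otimes\nu_k)=\sum_{j=1}^k\int \bar T_j(x)\log\frac{\bar T_j(x)}{\omega_j}\,d\mu(x)
=\sum_{j}\int \bar T_j\log\bar T_j\,d\mu-\sum_j m_j\log\omega_j,
\]
where $m_j:=\int\bar T_j\,d\mu$. Therefore
\[
\mathcal{F}_\lambda(Y,W,\bar T)=\sum_j\int\|x-y_j\|^p\,\bar T_j(x)\,d\mu(x)\;-\;\lambda\sum_j m_j\log\omega_j\;+\;\text{const}(\bar T).
\]
The first term depends only on $Y$ and the second only on $W$, so the joint minimization splits into two independent problems.

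Here I would make one point explicit. For fixed $\bar T$ I treat $\mathcal{F}_\lambda$ as a function of $(Y,W)$ on $(\R^d)^k\times\Delta_k$, relaxing the marginal constraint in $\mathbb{T}_k$. The E-step output $\bar T$ need not satisfy $\int\bar T_j\,d\mu=\omega_j^{(t)}$. The M-step choice $\omega_j=m_j$ restores exactly this constraint, so $\bar T\in\mathbb{T}_k$ for the new weights.

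\emph{$W$-block.} I would minimize $-\sum_j m_j\log\omega_j$ over the simplex. Since $\sum_j m_j=1$, Gibbs' inequality gives
\[
-\sum_j m_j\log\omega_j=-\sum_j m_j\log m_j+\KL(m\Vert\omega)\ge-\sum_j m_j\log m_j,
\]
with equality iff $\omega=m$. Equivalently, a Lagrange multiplier argument gives $\omega_j\propto m_j$. This is strictly convex, so the minimizer is unique.

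\emph{$Y$-block.} For each $j$, minimize $\int\|x-y_j\|^p\bar T_j\,d\mu$. The closed form in Definition~\ref{def:M-step} is the barycentric one, and it is exact only for the squared Euclidean cost $p=2$. In that case the gradient $2\int(y_j-x)\bar T_j\,d\mu=0$ yields the stated weighted mean, and strict convexity holds whenever $m_j>0$. For general $p$, I would state that the stationarity condition is $\int\|x-y_j\|^{p-2}(y_j-x)\bar T_j\,d\mu=0$. The displayed update is then the minimizer only for $p=2$; for $p\neq2$ it is a Weiszfeld-type fixed-point step. I would present the proposition for $p=2$, which I expect is what the paper intends.

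\emph{Monotonicity and equality.} Since $(Y^{(t+1)},W^{(t+1)})$ is the joint minimizer of the separable objective, evaluating at $(Y^{(t)},W^{(t)})$ gives the stated inequality. Because both blocks are strictly convex (assuming every $m_j>0$), the minimizer is unique. Equality therefore holds iff $(Y^{(t)},W^{(t)})$ is that minimizer, i.e.\ iff it satisfies both first-order conditions: $y_j^{(t)}$ is the $\bar T_j$-weighted mean and $\omega^{(t)}=m$.

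The main obstacle is not the calculus. It is the bookkeeping of the constraint $\int T_j\,d\mu=\omega_j$, which couples $W$ and $T$, together with the degenerate case $m_j=0$. For $m_j=0$ I would note that $y_j$ is then arbitrary and equality holds trivially in that coordinate.
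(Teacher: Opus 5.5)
Your proposal is correct and follows essentially the same route as the paper. With $\bar T^{(t+1)}$ fixed, the $Y$-block is minimized by the $\bar T_j$-weighted mean and the $W$-block by $\omega_j=\int\bar T_j\,d\mu$, and monotonicity then follows from the block minimization; the paper's proof also uses the squared cost $p=2$, though without saying so. Your Gibbs-inequality argument on the simplex is cleaner than the paper's step, which sets $-\tfrac{1}{\omega_j}\int\bar T_j\,d\mu=0$ without a Lagrange multiplier for $\sum_j\omega_j=1$, and your remarks on the relaxed marginal constraint and on $m_j=0$ make explicit points the paper leaves implicit.
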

Proof is shown in Appendix~\ref{AP:Proof_Estep}. From a probabilistic viewpoint,  the M-step updates the cluster locations and weights to best fit the current soft assignments induced by the transport plan. The location updates correspond to weighted centers of mass, i.e., maximum
likelihood estimates of cluster means under the soft assignments, while the
weight updates maximize the expected log-likelihood of the mixture weights.

For clarity, a detailed EM-algorithm for Sinkhorn (EMS) coreset with discrete $\mu$ (empirical dataset) is shown below. The complexity analysis and accuracy of our algorithm are discussed in experiments.
\begin{algorithm}[tb]
\caption{EMS Coreset with Mini-batch Data}
\label{alg:pop-streaming-em}
\begin{algorithmic}[1]
\REQUIRE dataset $\{x_i\}_{i=1}^n$ (or i.i.d.\ draws from $p(x)$), $k\ll n$, $\lambda\!>\!0$, batch size $B$, tolerance $\varepsilon\!>\!0$
\STATE Initialise  $Y^{(0)}, W^{0}=\{y^{(0)}_j,\omega^{(0)}_j\}_{j=1}^k$
\REPEAT
    \STATE \textbf{reset} $N_j\gets 0$, $S_j\gets 0$ for $j=1,\dots,k$
    \STATE Partition $\{i\}_{i=1}^n$ into batches $\{\mathcal{B}_m\}_{m=1}^{n/B}$
    \FOR{$m=1$ to $n/B$}
        \FOR{$i\in\mathcal{B}_m$ and $j\in[k]$}
            \STATE $\ell_{ij}\gets \log \omega^{(t-1)}_j -  \|x_i-y^{(t-1)}_j\|^2/\lambda$
            \STATE $\bar{T}^{(t+1)}_{ij}\gets$ Softmax$(l_{ij})$ by row
        \ENDFOR
        \STATE $N_j\gets N_j+\sum_{i\in\mathcal{B}_m} \bar{T}^{(k)}_{ij}$
        \STATE $S_j\gets S_j+\sum_{i\in\mathcal{B}_m} \bar{T}^{(k)}_{ij}\,x_i$
    \ENDFOR
    \FOR{$j=1$ to $k$}
        \STATE $\omega^{(k)}_j \gets N_j\big/\sum_{\ell=1}^k N_\ell$\newline
        \STATE $y^{(t)}_j \gets
           \begin{cases}
             S_j/N_j,&\text{if } N_j>0,\\
             y_j^{(t-1)},&\text{otherwise.}
           \end{cases}$
    \ENDFOR
    \UNTIL{$\|Y^{(t+1)}-Y^{(t)}\|<\varepsilon$}
\STATE {\bfseries return} $Y^{(t+1)},W^{(t+1)}$
\end{algorithmic}
\end{algorithm}

\begin{figure}[t]
  \centering
  \includegraphics[width=.49\linewidth]{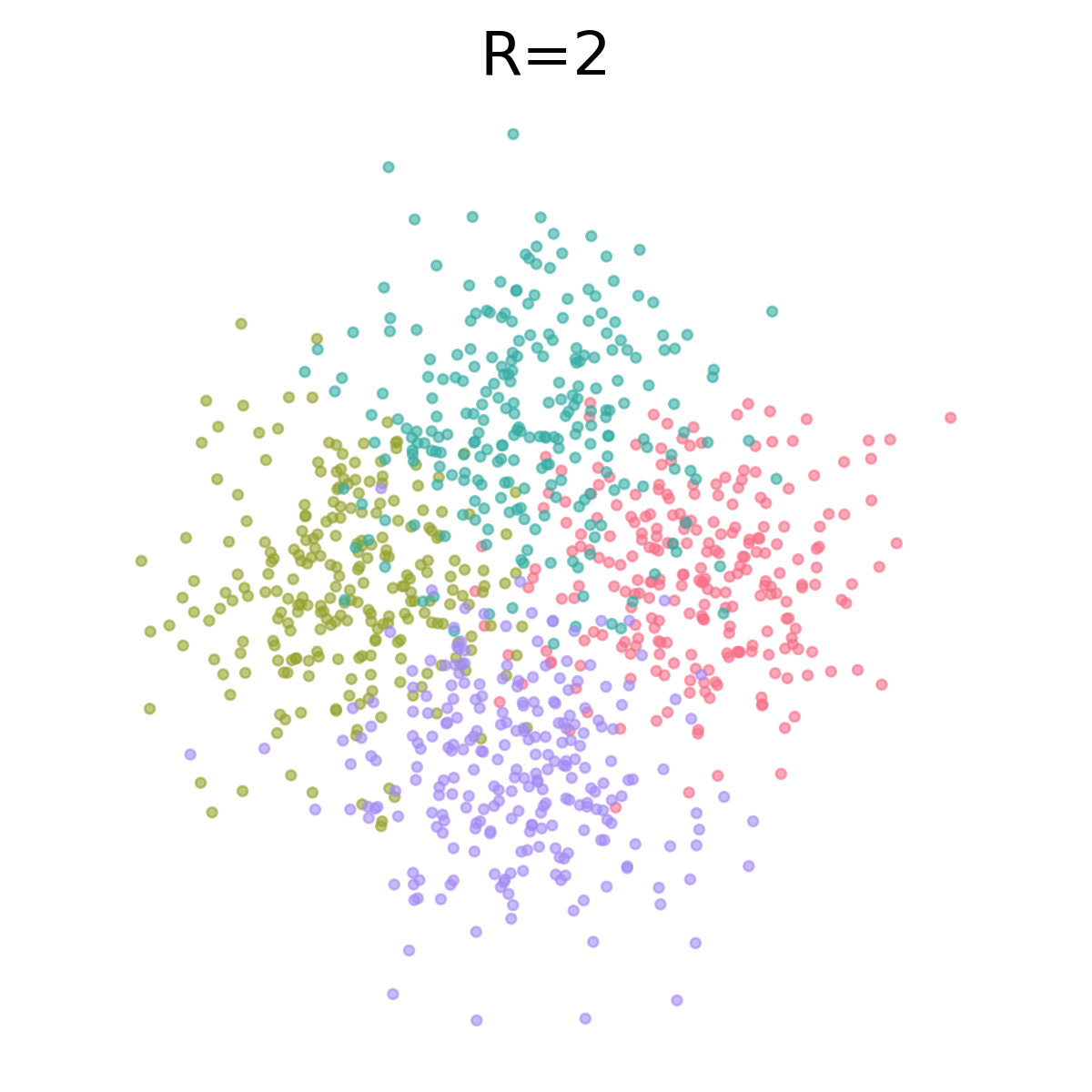}\hfill
  \includegraphics[width=.49\linewidth]{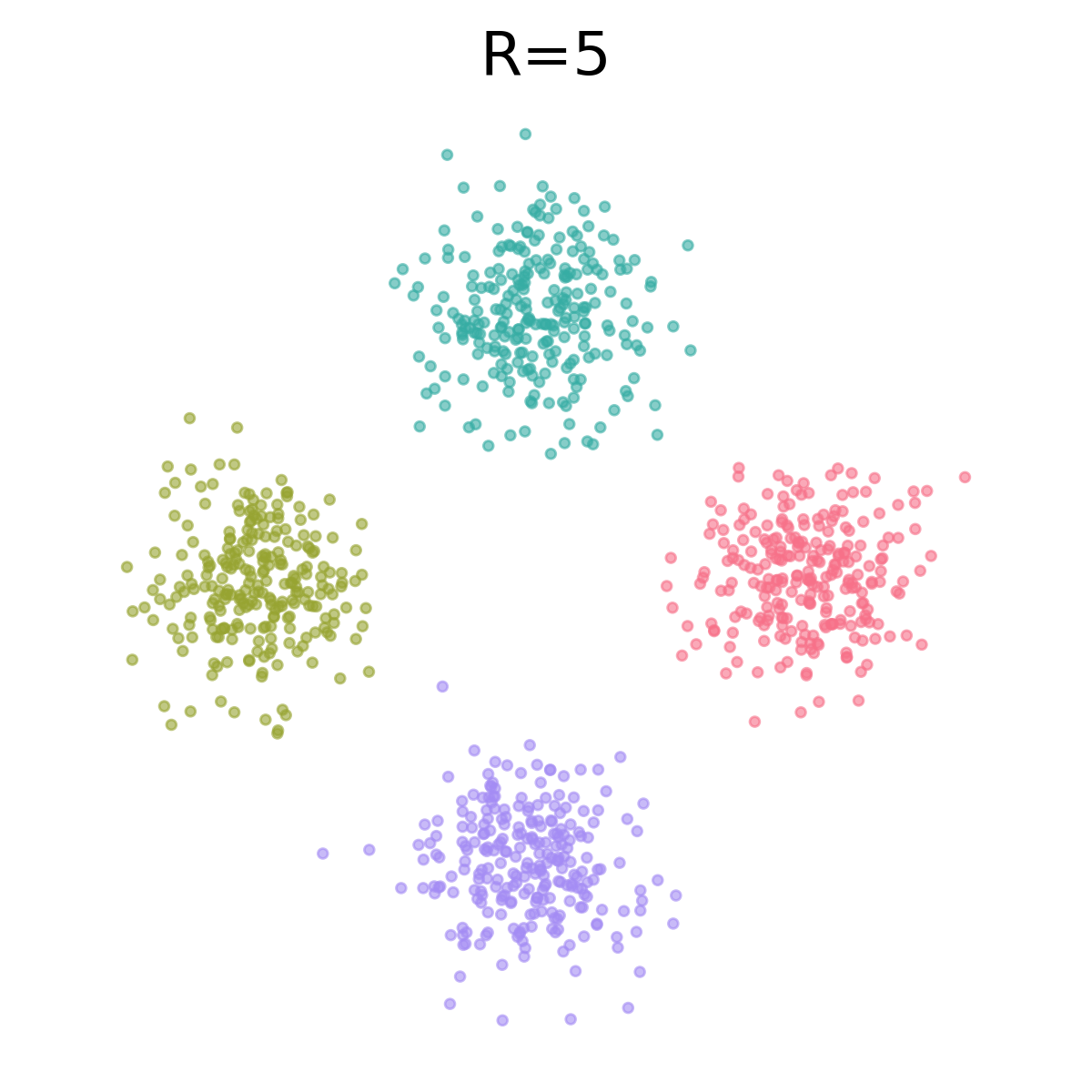}
  \caption{Synthetic Gaussian mixture datasets where each component has unit variance with its mean located at distance R from the origin.}
  \label{fig:gaussian}
\end{figure}

\begin{table}[t]
\centering
\caption{Distributional accuracy and total runtime (distance / time). Lower distance is better; bold indicates the best distance for each $k$. Statistics averaged over five runs. dist / time stands for distance / time.}
\label{tab:distance-runtime}
\small
\setlength{\tabcolsep}{6pt}
\resizebox{\textwidth}{!}{%
\begin{tabular}{lcccccccc}
\toprule
\multirow{3}{*}{Dataset}
& \multirow{3}{*}{Coreset size}
& {Random}
& {K-means}
& {WCSL}
& \multicolumn{4}{c}{EMS} \\
\cmidrule(lr){3-9}
& & & & $\lambda=10^{-2}$
& $\lambda=0$
& $\lambda=10^{-3}$
& $\lambda=10^{-2}$
& $\lambda=10^{-1}$ \\
&& \multicolumn{1}{c}{dist / time}
  & \multicolumn{1}{c}{dist / time}
  & \multicolumn{1}{c}{dist / time}
  & \multicolumn{1}{c}{dist / time}
  & \multicolumn{1}{c}{dist / time}
  & \multicolumn{1}{c}{dist / time}
  & \multicolumn{1}{c}{dist / time}\\
\midrule
\multirow{5}{*}{\textbf{Gaussian}}
& $k=10$  & 18.28 / - & 10.87 / 0.09 & \textbf{10.85} / 14.9 & 10.87 / \textbf{0.08} & 10.86 / 0.15 & 10.87 / 0.15 & 10.87 / 0.12 \\
& $k=20$  & 16.05 / - & 10.14 / 0.14 & \textbf{10.11} / 24.4 & 10.14 / \textbf{0.09} & 10.14 / 0.10 & 10.14 / 0.17 & 10.12 / 0.22 \\
& $k=50$  & 14.31 / - & 9.25 / 0.08 & \textbf{9.19} / 56.9 & 9.24 / \textbf{0.07} & 9.24 / 0.11 & 9.23 / 0.18 & 9.20 / 0.39 \\
& $k=100$ & 12.92 / - & 8.69 / \textbf{0.06} & \textbf{8.53} / 92.5 & 8.63 / \textbf{0.06} & 8.63 / 0.08 & 8.61 / 0.14 & 8.56 / 0.54 \\
& $k=200$ & 11.74 / - & 8.18 / 0.26 & \textbf{7.88} / 165.4 & 8.05 / \textbf{0.11} & 8.04 / 0.13 & 8.03 / 0.19 & 7.97 / 1.2 \\
\midrule
\multirow{5}{*}{\textbf{MNIST}}
& $k=10$  & 778.7 / - & 424.9 / 0.11 & 417.3 / 28.4 & 417.0 / 0.10 & 416.4 / 0.09 & \textbf{416.2} / \textbf{0.08} & 416.5 / \textbf{0.08} \\
& $k=20$  & 727.2 / - & 382.5 / 0.15 & 376.0 / 45.1 & 375.4 / 0.14 & \textbf{374.6} / \textbf{0.13} & 375.6 / \textbf{0.13} & \textbf{374.6} / 0.16 \\
& $k=50$  & 585.8 / - & 340.2 / 0.29 & 329.8 / 87.9 & 329.7 / 0.28 & \textbf{329.0} / \textbf{0.26} & 329.5 / 0.38 & 329.4 / 0.31 \\
& $k=100$ & 535.3 / - & 315.4 / 0.51 & \textbf{301.2} / 146.8 & 302.0 / 0.41 & 301.5 / 0.51 & 302.0 / \textbf{0.38} & 302.3 / 0.42 \\
& $k=200$ & 476.6 / - & 294.3 / 0.61 & \textbf{276.0} / 293.2 & 278.7 / \textbf{0.60} & 278.8 / 0.68 & 278.9 / 0.69 & 278.6 / 0.76 \\
\midrule
\multirow{5}{*}{\textbf{\makecell{Fashion\\MNIST}}}
& $k=10$  & 576.7 / - & 297.6 / 0.07 & 268.2 / 30.9 & 260.8 / 0.08 & \textbf{259.7} / 0.05 & 261.3 / \textbf{0.04} & 261.4 / \textbf{0.04} \\
& $k=20$  & 497.6 / - & 255.2 / \textbf{0.09} & 221.7 / 50.4 & 220.6 / 0.11 & \textbf{219.1} / 0.14 & 220.8 / 0.10 & 220.2 / 0.10 \\
& $k=50$  & 374.5 / - & 217.7 / \textbf{0.22} & 188.3 / 105.5 & \textbf{187.7} / 0.28 & 188.5 / 0.26 & 188.9 / 0.29 & 187.9 / 0.34 \\
& $k=100$ & 326.6 / - & 198.1 / 0.42 & \textbf{169.9} / 201.6 & 171.6 / \textbf{0.41} & 170.9 / 0.46 & 170.9 / 0.47 & 171.1 / 0.68 \\
& $k=200$ & 289.1 / - & 187.5 / 0.65 & \textbf{155.3} / 389.6 & 157.9 / 0.66 & 158.2 / \textbf{0.58} & 158.1 / 0.67 & 158.1 / 0.91 \\
\bottomrule
\end{tabular}%
}
\end{table}

\subsection{Robustness-Fidelity Trade-off}
\label{sec:stability}
Here, we show that the stability of our algorithm comes from Softmax-activated (Smoothed) $T$ updates. For coreset selection, it is important to ensure the transport plan does not change too wildly when data is perturbed.

\begin{proposition}[Bounded Gradient]\label{prop:bounded-gradient}
The gradient of the soft assignment from Eq.~\ref{eq:estep} satisfies
\begin{equation}
\label{eq:bounded-gradient}
\nabla_x \bar{T}_j(x)
=\frac{2}{\lambda}\,\bar{T}_j(x)\sum_{\ell=1}^{k}
\bar{T}_\ell(x)\,(y_j-y_\ell).
\end{equation}
Consequently, letting $D_Y:=\max_{i,j\in [k]}\|y_i-y_j\|$, we have
$\big\|\nabla_x \bar{T}_j(x)\big\|\ \le\ \frac{2D_Y}{\lambda}$.
\end{proposition}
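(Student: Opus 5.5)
The plan is to differentiate the softmax in Eq.~\ref{eq:estep} directly, specialised to the squared Euclidean cost $C_j(x)=\|x-y_j\|^2$ (the case $p=2$ used in Algorithm~\ref{alg:pop-streaming-em}). I would treat the weights $\omega$ and locations $Y$ as fixed, since the E-step output depends on $x$ only through the costs. Introduce the logits $s_j(x):=\log\omega_j - C_j(x)/\lambda$, so that $\bar{T}_j = e^{s_j}/\sum_\ell e^{s_\ell}$. The standard softmax Jacobian identity then gives
\begin{equation*}
\nabla_x \bar{T}_j(x)=\bar{T}_j(x)\Big(\nabla_x s_j(x)-\sum_{\ell=1}^k \bar{T}_\ell(x)\nabla_x s_\ell(x)\Big).
\end{equation*}
Next, compute $\nabla_x s_j(x) = -\tfrac{2}{\lambda}(x-y_j)$ and substitute it into this identity.

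The key observation is that, because $\sum_\ell \bar{T}_\ell(x)=1$, we can write $x=\sum_\ell \bar{T}_\ell(x)\,x$. Hence the $x$ terms cancel:
\begin{equation*}
-(x-y_j)+\sum_\ell \bar{T}_\ell (x-y_\ell) = y_j-\sum_\ell \bar{T}_\ell y_\ell=\sum_\ell \bar{T}_\ell (y_j-y_\ell).
\end{equation*}
Multiplying by $\tfrac{2}{\lambda}\bar{T}_j(x)$ yields exactly Eq.~\ref{eq:bounded-gradient}. This cancellation is the only step requiring any care. Without it the gradient would appear to grow with $\|x\|$, which is why the identity $\sum_\ell\bar{T}_\ell=1$ must be used before taking norms.

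For the bound, apply the triangle inequality to the sum. Since each $\bar{T}_\ell(x)\in[0,1]$ and the $\bar{T}_\ell(x)$ sum to one, $\sum_\ell \bar{T}_\ell\|y_j-y_\ell\|$ is a convex combination of pairwise distances, so it is at most $D_Y$. Also $\bar{T}_j(x)\le 1$. Together these give $\|\nabla_x\bar{T}_j(x)\|\le \tfrac{2}{\lambda}\bar{T}_j(x)D_Y\le \tfrac{2D_Y}{\lambda}$.

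One could sharpen the bound using $\bar{T}_j(1-\bar{T}_j)\le 1/4$, since the $\ell=j$ term vanishes, but the stated bound does not need this. I do not anticipate a genuine obstacle beyond fixing the cost to the squared norm. For general $p$, the factor $2(x-y_j)$ would be replaced by $p\|x-y_j\|^{p-2}(x-y_j)$, the cancellation would fail, and the displayed formula would no longer hold.
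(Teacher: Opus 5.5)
Your proof is correct and follows essentially the same route as the paper. The paper writes $\bar{T}_j=a_j/Z$ with $a_j(x)=\omega_j e^{-\|x-y_j\|^2/\lambda}$ and applies the quotient rule, which is the same computation as your softmax-Jacobian identity on the logits; it then uses the same cancellation via $\sum_\ell \bar{T}_\ell=1$ and the same convex-combination bound $\|\nabla_x\bar{T}_j(x)\|\le \frac{2}{\lambda}\bar{T}_j(x)D_Y\le \frac{2D_Y}{\lambda}$.
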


Please refer to Appendix~\ref{apx:proof-bounded-gradient}.
To conclude, the regularization parameter $\lambda$ controls the trade-off between the sensitivity and the optimality of the transport plan $\bar{T}$: larger $\lambda$ tightens the bound and enhances robustness to data perturbations, but introduces bias relative to the true OT plan, thereby the fidelity of data distribution, as discussed in Corollary \ref{cor:sinkhorn-lowbound}. By the same reasoning, uniform-weighted $k$-mean does not enjoy this property; for more stability results, please refer to Appendix \ref{AP:stable}.

\section{Numerical Experiments}\label{sec:experiments}

\paragraph*{Methods.}

1.\textbf{EMS}: Our proposed function space variational EM algorithm for weighted coreset construction with various $\lambda$;
2. \textbf{WCSL}: a gradient-based Sinkhorn coreset learner with \textit{uniform weights} following \cite{yin2025wasserstein}.
3. \textbf{K-mean centroids}: a mean squared error(MSE) coreset with \textit{uniform weights}\citep{lloyd1982least}.
Additional experimental details can be found in Appendix~\ref{AP:experiments}.

\begin{figure}[t]
  \centering
  \includegraphics[width=.82\linewidth]{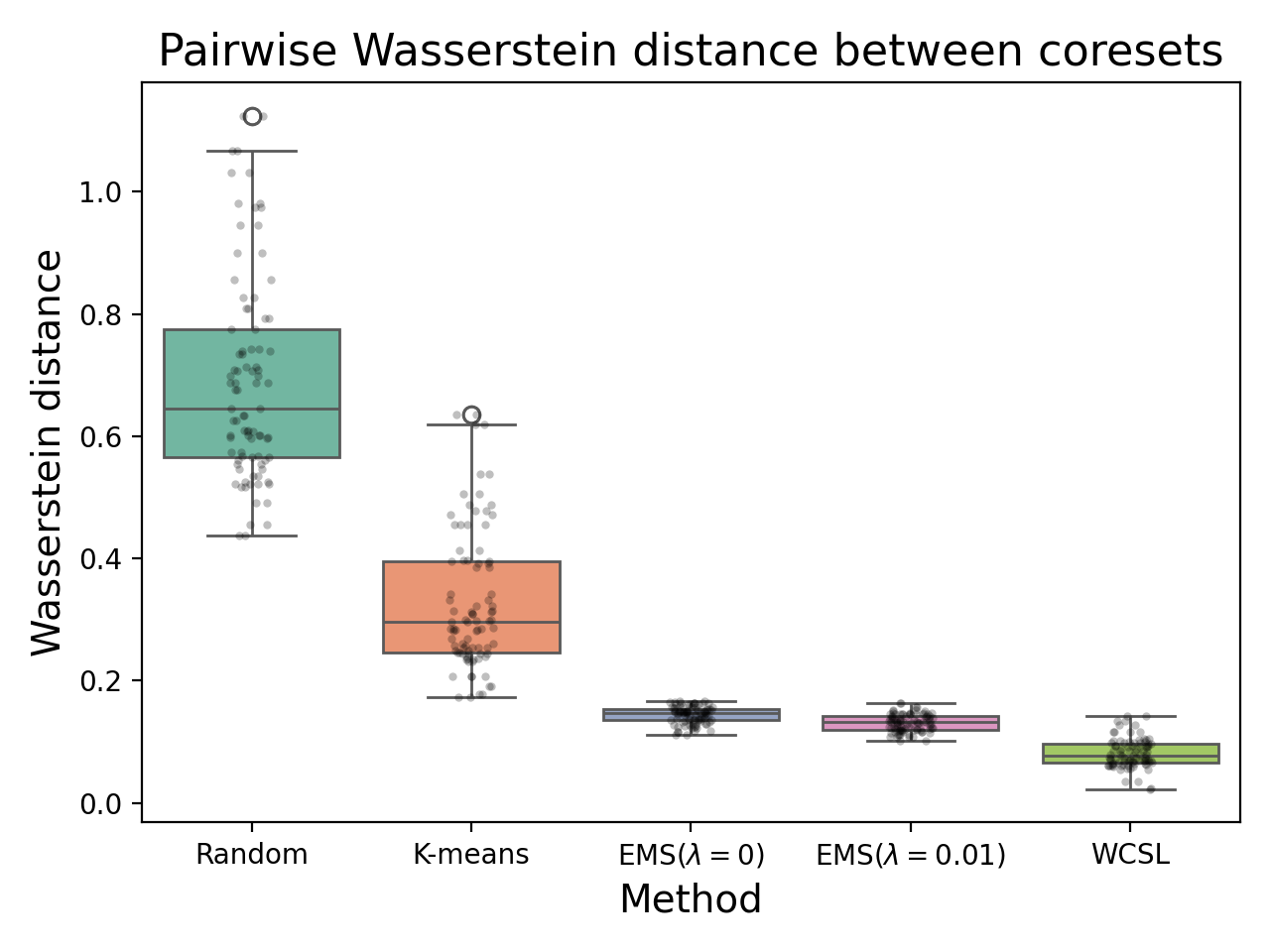}
  \includegraphics[width=.82\linewidth]{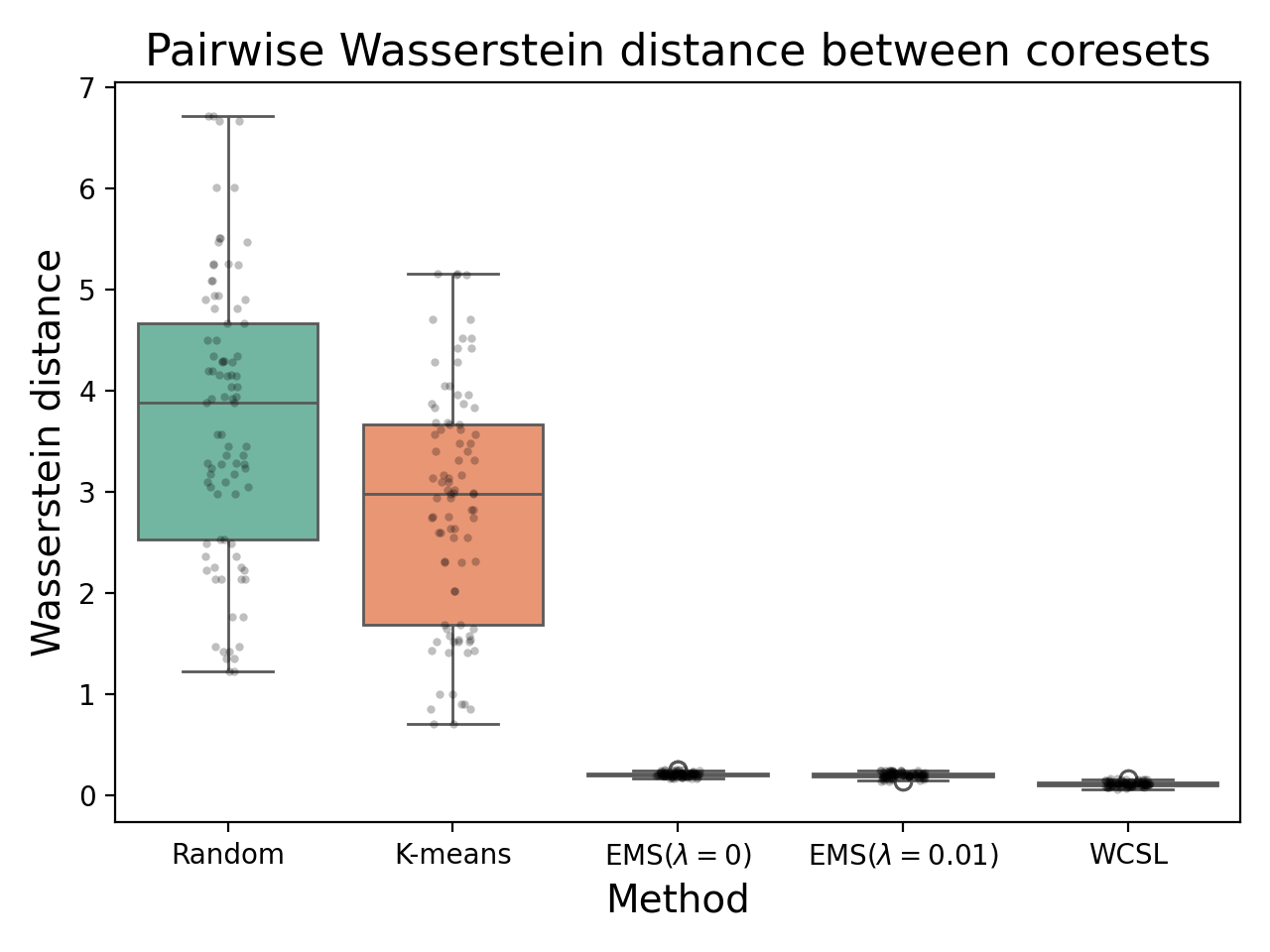}
  \caption{ Pairwise entropic Wasserstein distances across $10$ random initializations for Gaussian mixture dataset with $R=2$ (top) and $R=5$ (bottom).}
  \label{fig:stability}
  \vspace{-5pt}
\end{figure}
\subsection{Distributional Approximation Accuracy}\label{sec:distance}
This section we ask ``Does $\nu_k$ obtained by our algorithms approximate $\mu$ well in distributional distance, as suggested by Theorem~\ref{thm:finite-k-bound} and Corollary~\ref{thm:consistency}?"

\textbf{Setup.} We report entropic Wasserstein distances between the empirical dataset $\mu$ and $\nu_k$: (1) a $d{=}100$ Gaussian mixture benchmark with $n=10^4$ and $k\in\{10,20,50,100,200\}$, and (2) MNIST with the same range of $k$. We also record wall-clock time to show the quality-cost trade-off.

Table \ref{tab:distance-runtime} shows, EMS has similar performance in distributional accuracy as WCSL, while WCSL is much slower as it uses gradient descent to minimize the Sinkhorn loss directly. K-means lags on the image task, consistent with the uniform-weight bias in Lem.~\ref{cor:kmeans-inconsistent}. Additional experiments are conducted on real-world benchmark datasets as used in \citet{manousakas2020bayesian} can be found in table~\ref{tab:real_world}.

\subsection{Efficiency and Runtime Scaling}\label{sec:runtime}
Importantly, we ask: ``How fast is the EMS coreset construction compared to existing methods?"

\textbf{Setup.}
Gaussian mixture data is simulated as shown in Fig.~\ref{fig:gaussian}. Baseline is $n=10^{4},d=10, k=100$ and we vary one factor at a time:
\(n\in\{ 10^{4},10^{5},10^{6}\}\), \(d\in\{1,10,10^{2},10^{3}\}\), \(k\in\{10,10^{2},10^{3}\}\).
We report mean \(\pm\) std. wall-clock runtime over 20 repetitions.

Table~\ref{tab:runtime-summary} shows that the EMS family follows the near-linear cost via Alg.~\ref{alg:pop-streaming-em}: even at $n{=}10^6$ the unregularized EMS run finishes in less than ten seconds and the entropic variants remain in the same single-digit band, while WCSL stretches to thousands of seconds. The time complexity of our algorithm finding OT plan is $O(nkd)$, which is linear to the dataset size, coreset size and data dimension.

Increasing the dimension only causes a modest drift (roughly $20$ to $200$ ms), thanks to the decoupled softmax E-step in Eq.~\ref{eq:estep}. Furthermore, varying $k$ leaves the wall-clock nearly flat, consistent with the closed-form M-step updates of Prop.~\ref{prop:descent-T}. These results validate that EMS achieves fidelity on par with the EM routine, while attaining two to three orders of magnitude faster runtime compared to gradient-based baselines.

\begin{table}[t]
  \centering
  \caption{Wasserstein distance and runtime in seconds. Lower is better. Means over 10 runs. Default setup is $n=10^{4},d=10, k=100$.}
  \label{tab:runtime-summary}
  \small
  \setlength{\tabcolsep}{6pt}
  \resizebox{\textwidth}{!}{%
  \begin{tabular}{lcccccc}
    \toprule
    \multirow{3}{*}{Varying} & \multirow{3}{*}{Setup} & {WCSL} & \multicolumn{4}{c}{\textsc{EMS}} \\
    \cmidrule(lr){3-7}
    & & $\lambda=10^{-2}$& $\lambda=0$ & $\lambda=10^{-3}$ & $\lambda=10^{-2}$ & $\lambda=10^{-1}$ \\
    & & distance / time & distance / time & distance / time & distance / time & distance / time \\
    \midrule
    \multirow{3}{*}{\textbf{Data size $n$}}
    &$10^4$ & $5.203 / 9.749$ & $3.697 / \mathbf{0.028}$ & $3.709 / 0.031$ & $3.691 / 0.052$ & $\mathbf{3.670} / 0.129$ \\
    &$10^5$ & $\mathbf{3.634} / 161.777$ & $3.764 / \mathbf{0.452}$ & $3.765 / 0.524$ & $3.756 / 0.723$ & $3.755 / 0.971$ \\
    & $10^6$ & $0.154 / 2{,}006.306$ & $0.152 / \mathbf{7.368}$ & $0.152 / 9.051$ & $0.151 / 7.957$ & $\mathbf{0.151} / 9.653$ \\
    \midrule
     \multirow{4}{*}{\textbf{Dimension $d$}}
    &$1$ & $0.00791 / 2.450$ & $\mathbf{0.00171} / 0.020$ & $0.00199 / 0.024$ & $0.00244 / 0.024$ & $0.00543 / \mathbf{0.020}$ \\
    &$10$ & $4.891 / 8.909$ & $3.730 / \mathbf{0.026}$ & $3.730 / 0.026$ & $3.727 / 0.049$ & $\mathbf{3.688} / 0.130$ \\
    &$10^2$ & $78.400 / 23.061$ & $59.649 / \mathbf{0.046}$ & $59.638 / 0.055$ & $59.593 / 0.064$ & $\mathbf{59.466} / 0.186$ \\
    &$10^3$ & $830.629 / 61.522$ & $629.881 / 0.212$ & $629.901 / \mathbf{0.196}$ & $629.890 / 0.252$ & $\mathbf{629.760} / 0.516$ \\
    \midrule
    \multirow{3}{*}{\textbf{Coreset size $k$}}&
    $10$ & $14.639 / 2.509$ & $7.191 / \mathbf{0.026}$ & $6.520 / 0.030$ & $6.262 / 0.033$ & $\mathbf{5.622} / 0.036$ \\
    &$10^2$ & $4.994 / 8.866$ & $3.706 / \mathbf{0.026}$ & $3.709 / 0.029$ & $3.696 / 0.053$ & $\mathbf{3.674} / 0.121$ \\
    &$10^3$ & $2.148 / 210.962$ & $1.951 / \mathbf{0.029}$ & $1.952 / 0.031$ & $1.939 / 0.052$ & $\mathbf{1.871} / 0.232$ \\
    \bottomrule
  \end{tabular}%
  }
\end{table}




\subsection{Hyper-parameter Selection for $\lambda$}
We ask: ``How sensitive is EMS to the regularization parameter $\lambda$?"

Empirically, Tables~\ref{tab:distance-runtime} and \ref{tab:runtime-summary} show that EMS attains very similar distributional accuracy across a wide range of $\lambda$ (including $\lambda\in\{10^{-3},10^{-2},10^{-1}\}$ and the unregularized $\lambda{=}0$ variant), while runtime increases smoothly as $\lambda$ grows.

This behavior is consistent with the EM interpretation: $\lambda$ acts as a temperature in the softmax E-step (Eq.~\ref{eq:estep}), primarily controlling the \emph{sharpness} of responsibilities. The M-step (Def.~\ref{def:M-step}) re-normalizes these responsibilities and recomputes weighted centroids and mixture weights, so moderate changes in $\lambda$ tend to leave the converged coreset nearly invariant. In practice, we use $\lambda=10^{-2}$ as a default, set $\lambda=0$ for the fastest hard-assignment variant, and prefer larger $\lambda$ when additional robustness to perturbations is desired.

\subsection{Stability Across Initializations}\label{sec:stability-exp}
Next, we ask ``Do the Lipschitz properties of the transport plan and center updates translate into robust coreset constructions?"

\textbf{Setup.} For ten independent seeds, each method fits a coreset with $k{=}64$ on four-component Gaussian mixtures (with a small $n{=}1000$), as shown in Fig.\ref{fig:gaussian}. We compute pairwise entropic Wasserstein distances between the resulting $\nu_k$ from seeds and visualize the distances on bar plots.

Fig.~\ref{fig:stability} shows that EMS with $\lambda=0$ or $0.01$ produces concentrated points across seeds in both overlap ($R{=}2$) and separated ($R{=}5$) datasets. On the contrary, K-means significantly degrades on $R=5$ Gaussian mixture dataset, reflecting the inability of uniform weights to reassign mass across empty regions, as discussed Lem.~\ref{cor:kmeans-inconsistent}, where its hard assignments make it more susceptible to poor local minima. WCSL adapts locations but remains seed-sensitive. The low dispersion of EMS($\lambda=0.01$) aligns with our Lipschitz arguments and the monotone M-step discussed in Prop.~\ref{prop:descent-T}, indicating robust convergence under soft assignments.

\subsection{Downstream Interpretability via Shapley Values}\label{sec:shapley}
\begin{table}[t]
  \centering
  \caption{Shapley reconstruction error on two test datasets (lower is better; best in bold). Results average over five runs.}
  \label{tab:shapley}
  \footnotesize
  \setlength{\tabcolsep}{4pt}
  \begin{tabular}{lccc}
    \toprule
    \multicolumn{4}{l}{\textbf{Census} ($n=6{,}513$)} \\
        \midrule
    Method & MeanAE & MaxAE & BVE \\
    EMS($\lambda=0.01$) & $\mathbf{5.07\times 10^{-3}}$ & $5.40\times 10^{-2}$ & $3.31\times 10^{-2}$ \\
    EMS($\lambda=0$) & $8.10\times 10^{-3}$ & $8.87\times 10^{-2}$ & $3.73\times 10^{-2}$ \\
    K-means & $5.76\times 10^{-3}$ & $\mathbf{4.34\times 10^{-2}}$ & $4.33\times 10^{-2}$ \\
    WCSL & $5.52\times 10^{-3}$ & $4.92\times 10^{-2}$ & $\mathbf{1.01\times 10^{-2}}$ \\
    \midrule
    \multicolumn{4}{l}{\textbf{Diabetes} ($n=89$)} \\
        \midrule
    Method & MeanAE & MaxAE & BVE \\
    EMS($\lambda=0.01$) & $\mathbf{1.04}$ & $\mathbf{4.38}$ & $\mathbf{1.38}$ \\
    EMS($\lambda=0$) & $1.43$ & $5.10$ & $2.17$ \\
    K-means & $1.39$ & $7.66$ & $10.75$ \\
    WCSL & $1.44$ & $6.55$ & $5.24$ \\
    \bottomrule
  \end{tabular}
\end{table}

Shapley-value explainers (SHAP; \citealp{shapley1953value,lundberg2017unified}) require a \emph{background distribution} to marginalize missing features, and the quality of this background directly affects feature attributions.
To test whether a coreset preserves downstream interpretability, we replace the full training background with a weighted coreset $\nu_k$ (here $k{=}64$) and compute test-set SHAP values using \texttt{shap.TreeExplainer}.
We treat SHAP computed with the full training background as the reference, and report deviations of coreset-background SHAP from this reference (MeanAE/MaxAE over the SHAP matrix) along with base-value error (BVE); full setup and definitions are in Appendix~\ref{AP:experiments}. We evaluate our method against $k$-means as a dataset summarization baseline for downstream tasks.

Table~\ref{tab:shapley} shows that EMS with $\lambda{=}0.01$ achieves the lowest \emph{average} distortion (MeanAE) on both datasets and is best on all three metrics on Diabetes, indicating that weighted Sinkhorn coresets provide faithful SHAP backgrounds.
On Census, K-means attains a slightly smaller worst-case deviation (MaxAE) but has larger mean distortion and base-value drift, consistent with the uniform-weight density bias discussed in Lem.~\ref{cor:kmeans-inconsistent}.
WCSL yields a strong base-value match on Census, but (as shown in Table~\ref{tab:distance-runtime}) incurs substantially higher coreset construction cost.
Overall, the improvements from EMS align with our mechanism: \emph{adaptive weights} better match the data density, while \emph{soft assignments} reduce high-variance attribution artifacts near low-density regions.

\section{Conclusion}
This work presents a fast and robust approach for constructing weighted coresets via approximate OT plans under a comprehensive theoretical framework on OT-based coresets. Our approach, EMS, combines a closed-form softmax E-step with decoupled M-step updates, enabling fast and stable optimization of the Sinkhorn loss. By maintaining strong fidelity to the original data, the coresets support downstream interpretability, such as Shapley value computation, while running orders of magnitude faster than gradient-based methods. A discussion of limitations and future research directions is provided in Appendix~\ref{AP:limitations-images}.

\bibliographystyle{plainnat}
\bibliography{references}

\newpage
\appendix
\section{Proofs}

\subsection{Proof of Theorem~\ref{thm:finite-k-bound}}\label{apx:proof-finite-k-bound}
\begin{proof}
Let $\mu = \rho(x)\,dx$ be a probability measure on a compact convex domain $\Omega \subset \mathbb{R}^d$, with $\rho \in C^2(\Omega)$ and $\rho_{\min} > 0$, and let
$\nu_k^* = \arg\min_{\nu \in \mathbb{G}_k} W_{\lambda,p}(\mu,\nu)$
be the optimal $k$-point quantizer. Denote by $h_k \sim k^{-1/d}$ the typical Voronoi diameter.

By classical quantization results from \cite{graf2000foundations} and Zador's Theorem \citep{zador1963quantization}, the deterministic Voronoi quantizer $\nu_{k,vor}$ satisfies
\begin{equation}
W_p^p(\mu, \nu_{k,vor}) = O(k^{-1/d}),
\end{equation}
where the space is partitioned into Voronoi cells $V_j$ around points $y_{vor}$ such that each cell contains all points closer to its center than to any other. This provides an upper bound for $W_p(\mu, \nu_k^*)$.

To bound the KL divergence, we define a Gibbsized local coupling $\pi_{\tilde{T}}$ for each Voronoi cell $V_j$:
\begin{equation}
\pi_{\tilde{T}}(dx,dy) = \rho(x)\,q_\lambda(y|x)\,dx\ \text{with}\
q_\lambda(y|x) = \frac{\exp\!\big(-\|x-y\|^p/\lambda\big)}{\sum_{m=1}^k \exp\!\big(-\|x-y_m\|^p/\lambda\big)}.
\end{equation}
The logarithm of the partition function is:
\begin{equation}
\log Z_\lambda(x) = \log \sum_{m=1}^k \exp(-c_m/\lambda), \text{with} \ c_m := \|x-y_m\|^p,
\end{equation}
which acts as a soft-minimum of the local costs $\{c_m\}_{m=1}^k$. For small $\lambda$, expanding $\log Z_\lambda(x)$ gives
\begin{equation}
- \lambda \log \sum_m e^{-c_m/\lambda} = \min_m c_m + \frac{\lambda}{2} \sum_{m \neq j^*} e^{-(c_m - c_{j^*})/\lambda} + \cdots,
\end{equation}
where $j^* = \arg\min_m c_m$.
From this expansion, the Gibbs probabilities are dominated by the nearest center:
\begin{equation}\label{eq:qapprox}
q_\lambda(y_m|x) \approx
\begin{cases}
1 - \sum_{m \neq j^*} e^{-(c_m - c_{j^*})/\lambda}, & m = j^* \\
e^{-2(c_m - c_{j^*})/\lambda}, & m \neq j^*
\end{cases}.
\end{equation}
For a single Voronoi cell $V_j$, the KL divergence w.r.t.\ the independent measure $\mu \otimes \nu$ is
\begin{equation}
\mathrm{KL}(\pi_{\tilde{T}}\Vert \mu\otimes\nu)
= \sum_m \int_{V_j} \pi_{\tilde{T}}(x,y_m) \log \frac{\pi_{\tilde{T}}(x,y_m)}{\mu(V_j) \nu(y_m)} \, dx.
\end{equation}
Plugging in $\pi_{\tilde{T}}(x,y_m) = \rho(x) q_\lambda(y_m|x)$ and separating the entropy term:
\begin{equation}
\mathrm{KL}(\pi_{\tilde{T}}\Vert \mu\otimes\nu) = \sum_m \int_{V_j} \rho(x) q_\lambda(y_m|x) \log q_\lambda(y_m|x) \, dx + \sum_m \int_{V_j} \rho(x) q_\lambda(y_m|x) \log \frac{\rho(x)}{\mu(V_j) \nu(y_m)} \, dx.
\end{equation}
The first term dominates for small $\lambda$. Using the approximation in \ref{eq:qapprox}, we obtain
\begin{equation}
\lambda \, \mathrm{KL}(\pi_{\tilde{T}}\Vert \mu\otimes\nu)  = \frac{\lambda}{2} \int_{V_j} \sum_{m \neq j^*} \frac{\rho(x)}{\mu(V_j)} e^{-(c_m - c_{j^*})/\lambda} \, dx + O(\lambda e^{-3 c h_k^p/\lambda}).
\end{equation}
Since $c_m(x) - c_{j^*}(x) = \Theta(h_k^p)$ within each cell, Laplace's method gives
\begin{equation}
\lambda \, \mathrm{KL}(\pi_{\tilde{T}}\Vert\mu\otimes\nu) = O\!\big(\lambda e^{-c h_k^p/\lambda}\big) + O(\lambda h_k^{2p}).
\end{equation}
Choosing $\lambda_k = c\, h_k^p = c\, k^{-p/d}$, the Taylor remainder is $O(k^{-2p/d})$, so
\begin{equation}
\mathcal{F}_{\lambda_k}(Y_{vor}, W_{vor}, \tilde{T}) = O(k^{-p/d}) + O(k^{-2p/d}).
\end{equation}
By optimality of $\nu_k^*$ and nonnegativity of KL,
\begin{equation}
W_p^p(\mu, \nu_k^*) \le W_{\lambda,p}^p(\mu, \nu_k^*) \le C k^{-p/d} + C' k^{-2p/d}.
\end{equation}
Taking the $p^\text{th}$ root, we conclude
\begin{equation}
W_p(\mu, \nu_k^*) \le C k^{-1/d} + C' k^{-2/d},
\end{equation}
where the first term is the classical quantization rate, and the second arises from the second-order Taylor expansion of the entropic bias. This corresponds to Eq.\ref{eq:finite-m-bound} and thus completes the proof.

By the way, if we further assume that the support of $\mu$ lies on a $d$-dimensional smooth manifold $\mathcal{M} \subset \Omega$ with $d_m=d/2$ intrinsic manifold dimension, the quantization error $C k^{-1/d_m}$ in the first term can be absorbed into the second-order term by suitably adjusting constants, under standard manifold quantization argument \citep{graf2000foundations}. The Wasserstein distance admits the simplified bound
\begin{equation}
W_p(\mu, \nu_k^*) \le 2 C k^{-2/d},
\end{equation}
where $C$ depends on the manifold geometry and smoothness of $\rho$, $p$, and data dimension $d$.
\end{proof}

\subsection{Proof of Corollary~\ref{thm:consistency}}\label{apx:proof-consistency}

\begin{proof}
Taking the limit as $k \to \infty$, since $1/d > 0$ and $2/d > 0$, we have
\begin{equation}
\lim_{k \to \infty} C\, k^{-1/d} = 0, \quad \lim_{k \to \infty} C'\, k^{-2/d} = 0.
\end{equation}
Thus, combining the vanishing of the classical quantization error and the entropic KL correction, we conclude that, asymptotically,
\begin{equation}
\lim_{k \to \infty} W_p(\mu, \nu_k^*) = 0.
\end{equation}
In other words, the Sinkhorn coresets converge to the underlying $\mu$ in terms of the Wasserstein distance.
\end{proof}
%
%

\subsection{High-resolution asymptotics for uniform $k$-means}\label{AP:highres}

We first describe the regularity assumption and classical density limit that is mentioned Sec.~\ref{subsec:kmeans-inconsistent}. Lemma~\ref{cor:kmeans-inconsistent} is an informal version of Theorem~\ref{thm:kmeans-density}.

\begin{assumption}[High-resolution regularity\citep{gersho1979asymptotically}]
\label{asmp:highres}
Let $\mu$ be absolutely continuous on $\R^d$ with density $p$ that is bounded and continuous on a compact set. For each $k$, let $Y_k=\{y_1,\ldots,y_k\}\subset\R^d$ be a (near-)minimizer of the distortion
\begin{equation}
\begin{aligned}
L_k(Y)
&=\int_{\R^d}\min_{1\le j\le k}\|x-y_j\|^2\,p(x)\,dx.
\end{aligned}
\end{equation}
Write $\{V_j\}_{j=1}^k$ for the Voronoi partition induced by $Y_k$. We assume the usual high-resolution regularity: cell diameters $\max_j \operatorname{diam}(V_j)\to 0$ as $k\to\infty$, boundary effects are negligible, and local cell shapes are asymptotically regular.
\end{assumption}

\begin{theorem}[Asymptotic density for $k$-means\citep{gersho1979asymptotically}]\label{thm:kmeans-density}
Under Assumption~\ref{asmp:highres}, any sequence of asymptotically optimal centroids $\{Y_k\}$ admits a limiting \emph{codepoint density} $g^\star$ on $\R^d$ such that, for every bounded continuous test function $\varphi$ as $k\to\infty$, we have
\begin{equation}
\label{eq:kmeans-dd2}
\frac{1}{k}\sum_{j=1}^k \varphi(y_j)
\longrightarrow \int_{\R^d}\varphi(y)\,g^\star(y)\,dy,
\text{with}\
g^\star(y)
\propto p(y)^{\frac{d}{d+2}}.
\end{equation}
Equivalently, $g^*(x)$ can't recover underlying true $p(x)$ and
\begin{equation}
g^\star \;=\; \frac{p^{\,\frac{d}{d+2}}}{\int p^{\,\frac{d}{d+2}}}.
\end{equation}
\end{theorem}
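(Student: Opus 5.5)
The plan is the classical Gersho high-resolution argument. We parametrize the codebook by a point density and the cell shapes, write the asymptotic distortion as a functional of that density, and minimize the functional under a normalization constraint. Concretely, under Assumption~\ref{asmp:highres} we pass to a subsequence along which the empirical codepoint measures $\frac1k\sum_j\delta_{y_j}$ converge weakly to some probability measure. This is possible by tightness, since all $y_j$ may be taken in a fixed compact neighborhood of the support: projecting a center onto the convex hull of the support never increases $L_k$. We then argue that the limit has a density $g$ with $\int g=1$, so that the cell containing $x$ has volume $|V_j|\approx 1/(k\,g(x))$. Once the limit is shown to be independent of the subsequence, the full sequence converges.

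\textbf{Step 1 (distortion functional).} Since cell diameters vanish, $p$ is nearly constant on each $V_j$. The contribution of $V_j$ is therefore $p(y_j)\int_{V_j}\|x-y_j\|^2dx \approx p(y_j)\,G_d\,|V_j|^{1+2/d}$, where $G_d$ is the normalized second moment of the asymptotically regular cell shape; this uses Gersho's conjecture, which we take as part of the "asymptotically regular shapes" hypothesis. Substituting $|V_j|\approx 1/(k g(y_j))$ and turning the Riemann sum into an integral gives
\begin{equation*}
k^{2/d}L_k(Y_k)\;\longrightarrow\; G_d\int p(y)\,g(y)^{-2/d}\,dy .
\end{equation*}
The error terms come from the oscillation of $p$ on cells and from boundary cells. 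Both are $o(1)$ by the continuity of $p$, the vanishing diameters, and the assumption that boundary effects are negligible.

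\textbf{Step 2 (optimization).} Asymptotic optimality of $Y_k$ means the limit in Step 1 must equal the infimum over admissible densities $g\ge0$ with $\int g=1$. We apply Hölder's inequality with exponents $\frac{d+2}{d}$ and $\frac{d+2}{2}$:
\begin{equation*}
\int p^{\frac{d}{d+2}}=\int \bigl(p\,g^{-2/d}\bigr)^{\frac{d}{d+2}}g^{\frac{2}{d+2}}\le\Bigl(\int p\,g^{-2/d}\Bigr)^{\frac{d}{d+2}}\Bigl(\int g\Bigr)^{\frac{2}{d+2}} .
\end{equation*}
This yields $\int p\,g^{-2/d}\ge(\int p^{d/(d+2)})^{(d+2)/d}$. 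Equality holds iff $p\,g^{-2/d}\propto g$, i.e.\ $g\propto p^{d/(d+2)}$. Strict convexity of $g\mapsto\int p\,g^{-2/d}$ makes the minimizer unique. Hence every subsequential limit equals $g^\star$, and the whole sequence converges weakly, which is \eqref{eq:kmeans-dd2}. The formula $g^\star\neq p$ unless $p$ is constant on its support then follows immediately.

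\textbf{Main obstacle.} The hard part is rigor in Step 1, namely justifying that the limiting codepoint measure has a density and that the local volume identity $|V_j|\approx 1/(kg)$ holds uniformly. We would not reprove this. Instead we rely on the stated regularity assumptions, or on the rigorous version by Bucklew and Graf--Luschgy: the empirical center measure of asymptotically optimal quantizers converges weakly to the normalized $p^{d/(d+r)}$ with $r=2$. We would simply cite that result after checking that its hypotheses are covered by Assumption~\ref{asmp:highres}; compact support plus boundedness gives the needed moment condition.
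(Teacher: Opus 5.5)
Your proposal is correct and follows essentially the same Gersho high-resolution argument as the paper. Both proofs use cellwise second moments $\approx\alpha_d|V_j|^{1+2/d}$ with $|V_j|\approx 1/(k\,g(y_j))$, pass to the limiting functional $\alpha_d k^{-2/d}\int p\,g^{-2/d}$, and minimize it by H\"older's inequality under $\int g=1$ to get $g^\star\propto p^{d/(d+2)}$. Your subsequence-plus-uniqueness step and your explicit reliance on Gersho's conjecture and the Bucklew/Graf--Luschgy theorem make precise what the paper simply assumes: a continuous codepoint density $g$ and a final ``Riemann-sum argument''. One small slip: projecting centers onto the convex hull is a legitimate simplification when you construct quantizers, but not for an arbitrary given near-optimal sequence. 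For such a sequence, derive tightness from asymptotic optimality (centers far from the support can hold only a vanishing fraction of the codebook), or take it from the cited theorem.
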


\begin{proof}
Partition $\R^d$ into Voronoi cells $\{V_j\}_{j=1}^k$ induced by the generator set $Y=\{y_j\}_{j=1}^k$.
The expected quantization error can be expressed as
\begin{equation}
L_k(Y) = \sum_{j=1}^k \int_{V_j} \|x - y_j\|^2\, p(x)\, dx.
\end{equation}
Under the local uniformity assumption on $p$ within each $V_j$, we have the asymptotic relation
\begin{equation}
L_k(Y) = \sum_{j=1}^k p(y_j) \int_{V_j} \|x - y_j\|^2\, dx + o(k^{-2/d}),
\quad \text{as } k \to \infty.
\end{equation}
From high-resolution quantization theory, each cell integral satisfies
\begin{equation}
\int_{V_j} \|x - y_j\|^2\, dx
= \alpha_d\, |V_j|^{1+\frac{2}{d}} + o(|V_j|^{1+\frac{2}{d}}),
\end{equation}
where $\alpha_d > 0$ is a shape-dependent constant.
If the empirical locations $\{y_j\}$ are asymptotically distributed according to a continuous density $g$, then the cell volumes satisfy
\begin{equation}
|V_j|\, g(y_j) = k^{-1} + o(k^{-1}).
\end{equation}
Substituting these expressions yields
\begin{equation}
L_k(Y) = \alpha_d\, k^{-\frac{2}{d}}
\sum_{j=1}^k p(y_j)\, g(y_j)^{-\frac{2}{d}}\, |V_j|
+ o(k^{-\frac{2}{d}}),
\end{equation}
which in the limit becomes
\begin{equation}
L_k(Y) = \alpha_d\, k^{-\frac{2}{d}}
\int_{\R^d} p(y)\, g(y)^{-\frac{2}{d}}\, dy
+ o(k^{-\frac{2}{d}}).
\end{equation}
Minimizing the leading-order term over densities $g$ under the constraint $\int g = 1$
and applying H\"older's inequality yields the unique optimizer
\begin{equation}
g^\star(y) \propto p(y)^{\frac{d}{d+2}} \neq p(y).
\end{equation}
Finally, by a Riemann-sum argument under Assumption~\ref{asmp:highres},
the discrete sum converges weakly to the integral in equation~\ref{eq:kmeans-dd2}.
\end{proof}

\subsection{Proof of Proposition~\ref{prop:weights-fix}}\label{apx:proof-weights-fix}
\begin{proof}
For any bounded continuous $\varphi$,
\begin{equation}
\sum_{j=1}^k \omega_j \varphi(y_j)\approx
\frac{\sum_{j=1}^k \frac{p(y_j)}{g^\star(y_j)}\,\varphi(y_j)}
{\sum_{j=1}^k \frac{p(y_j)}{g^\star(y_j)}}\longrightarrow
\frac{\int \frac{p(y)}{g^\star(y)}\varphi(y)\,g^\star(y)\,dy}
{\int \frac{p(y)}{g^\star(y)}\,g^\star(y)\,dy}=\int \varphi(y)\,p(y)\,dy,
\end{equation}
using weak convergence of $\{y_j\}$ to $g^\star$ and continuity of $y\mapsto p(y)/g^\star(y)$. Hence $\nu_k^{\,w}$ weakly converge to $\mu$ as $k\to\infty$.
\end{proof}

\subsection{Proof of Lemma~\ref{lemma:Fenchel}}\label{apx:proof-fenchel}
\begin{proof}
For any $f_1, f_2$ and $\lambda \in [0,1]$, by Hölder's inequality
\begin{equation}
F_{\lambda f_1 + (1-\lambda)f_2,P_0}
= \log \mathbb{E}_{P_0}\!\left[e^{\lambda f_1(T)+(1-\lambda)f_2(T)}\right] \le \log \Big(\mathbb{E}_{P_0}[e^{f_1(T)}]\Big)^{\lambda}
           \Big(\mathbb{E}_{P_0}[e^{f_2(T)}]\Big)^{1-\lambda}= \lambda F_{f_1,P_0} + (1-\lambda)F_{f_2,P_0}.
\end{equation}

Hence, $F_{f,P_0}$ is convex in $f$. For any $q \ll P_0$, let $r(T) = \frac{dq}{dP_0}(T)$, then
\begin{equation}
\mathbb{E}_{q}[f(T)] = \mathbb{E}_{P_0}[r(T)f(T)], \quad
\mathrm{KL}(q\|P_0) = \mathbb{E}_{P_0}[r(T)\log r(T)].
\end{equation}
By definition of $F_{f,P_0}$,
\begin{equation}
F_{f,P_0} = \log \mathbb{E}_{P_0}[e^{f(T)}]
          = \log \mathbb{E}_{q}\!\left[\frac{e^{f(T)}}{r(T)}\right].
\end{equation}
Applying Jensen’s inequality to $\log$ yields
\begin{equation}
F_{f,P_0}
\ge \mathbb{E}_{q}\!\left[\log \frac{e^{f(T)}}{r(T)}\right]
 = \mathbb{E}_{q}[f(T)] - \mathbb{E}_{q}[\log r(T)] = \mathbb{E}_{q}[f(T)] - \mathrm{KL}(q\|P_0).
\end{equation}
Taking the supremum over all $q \ll P_0$ gives
\begin{equation}
F_{f,P_0} \ge \sup_{q \ll P_0} \Big\{\mathbb{E}_{q}[f(T)] - \mathrm{KL}(q\|P_0)\Big\}.
\end{equation}

To understand the equality condition, define the Gibbs measure
\begin{equation}
q^*(T) = \frac{e^{f(T)} P_0(T)}{\mathbb{E}_{P_0}[e^{f(T)}]}.
\end{equation}
Then  and
\begin{align}
\frac{dq^*}{dP_0}(T) = \frac{e^{f(T)}}{\mathbb{E}_{P_0}[e^{f(T)}]}; \quad \mathrm{KL}(q^*\|P_0)
&= \mathbb{E}_{q^*}\!\left[\log \frac{dq^*}{dP_0}\right]
 = \mathbb{E}_{q^*}[f(T)] - F_{f,P_0}.
\end{align}
Rearranging gives
\begin{equation}
F_{f,P_0} = \mathbb{E}_{q^*}[f(T)] - \mathrm{KL}(q^*\|P_0),
\end{equation}
where equality holds at $q = q^*$. Therefore,
\begin{equation}\label{eq:fenchel}
F_{f,P_0}
= \sup_{q \ll P_0} \Big\{\mathbb{E}_{q}[f(T)] - \mathrm{KL}(q\|P_0)\Big\},
\end{equation}
with unique maximizer $q^*(T) \propto e^{f(T)} P_0(T)$.
\end{proof}

\subsection{Proof of Corollary~\ref{cor:sinkhorn-lowbound}}\label{apx:proof-sinkhorn-lowbound}
\begin{proof}
Starting from \ref{eq:fenchel} at the optimum $q^*$,
\begin{equation}
F_{f,P_0}
= \mathbb{E}_{T\sim q^*}[f(T)] - \mathrm{KL}(q^*\|P_0).
\end{equation}

Let $f(T) = -\tfrac{1}{\lambda}\langle C, T\rangle_\mu$ and multiply both sides by $-\lambda$,
\begin{align}\label{eq:freeenergy-expand}
-\lambda F_{f,P_0}
= \mathbb{E}_{T\sim q^*}[\langle C,T\rangle_\mu]
  + \lambda\,\mathrm{KL}(q^*\|P_0).
\end{align}

By assumption on $P_0$, $
\mathrm{KL}(q^*\|P_0)
= \mathbb{E}_{T\sim q^*}\!\Big[\KL(\pi_T\|\mu\otimes\nu_k)\Big]$, then substituting into~\ref{eq:freeenergy-expand} gives
\begin{equation}
-\lambda F_{f,P_0}
= \mathbb{E}_{T\sim q^*}[\langle C,T\rangle_\mu]
  + \lambda\,\mathbb{E}_{T\sim q^*}\!\Big[\KL(\pi_T\|\mu\otimes\nu_k)\Big].
\label{eq:freeenergy-sinkhorn}
\end{equation}

Since both $\langle C, T\rangle_\mu$ and $\KL(\pi_T\|\mu\otimes\nu_k)$ are convex in $T$,
by Jensen’s inequality,
\begin{align}
\mathbb{E}_{T\sim q^*}[\langle C,T\rangle_\mu]
&\ge \langle C,\bar{T}\rangle_\mu,\\
\mathbb{E}_{T\sim q^*}\!\Big[\KL(\pi_T\|\mu\otimes\nu_k)\Big]
&\ge \KL(\pi_{\bar{T}}\|\mu\otimes\nu_k),
\end{align}
where $\bar{T} := \mathbb{E}_{T\sim q^*}[T]$. Applying these to~\ref{eq:freeenergy-sinkhorn} yields the inequality
\begin{equation}
-\lambda F_{f,P_0}
\ge \langle C,\bar{T}\rangle_\mu
  + \lambda\,\KL(\pi_{\bar{T}}\|\mu\otimes\nu_k).
\end{equation}
The right-hand side corresponds to the regularized optimal transport objective,
\begin{equation}
\mathcal{F}_\lambda(Y,W,\bar{T})
= \langle C,\bar{T}\rangle_\mu
  + \lambda\,\KL(\pi_{\bar{T}}\|\mu\otimes\nu_k),
\end{equation}
whose minimum over $\mathbb{T}_k$ defines the Sinkhorn loss.
Hence,
\begin{equation}
-\lambda F_{f,P_0}
\ge \mathcal{F}_\lambda(Y,W,\bar{T})
\ge \inf_{T\in\mathbb{T}_k} \mathcal{F}_\lambda(Y,W,T).
\end{equation}

\paragraph*{(A) Equality in the Jensen step.}
The first inequality in Corollary~\ref{cor:sinkhorn-lowbound}
originates from Jensen’s inequality, which is tight if and only if $T$ is almost surely constant under $q^*$, i.e.\ $q^*=\delta_T$.
This occurs either when $P_0=\delta(T-T_0)$ or in the zero-temperature limit $\lambda\to0$,
where the Gibbs measure collapses to a single deterministic transport plan:
\begin{equation}
\lim_{\lambda\to0} q^*(T) = \delta_{T^*}(T).
\end{equation}
In this limit the entropy term $\KL(q^*\|P_0)$ vanishes and the free energy reduces
to a pure energy functional:
\begin{equation}
-\lambda F_{f,P_0} = \langle C,T^*\rangle_\mu.
\end{equation}
Geometrically, the collapse of $q^*$ corresponds to the elimination of stochasticity
in the transport plan, while probabilistically it represents the transition from a
Boltzmann distribution to a deterministic mapping.

\paragraph*{(B) Equality at the global minimum.}
The second inequality,
\begin{equation}
\mathcal{F}_\lambda(Y,W,\bar{T}) \ge \inf_T \mathcal{F}_\lambda(Y,W,T),
\end{equation}
is tight when $\bar{T}=\mathbb{E}_{T\sim q^*}[T]$ coincides with the global minimizer
$T^*$ of the Sinkhorn functional. Even if $q^*$ is not a Dirac measure,
equality is approximately achieved when $q^*$ concentrates its mass near $T^*$,
so that $\mathbb{E}_{q^*}[T]\approx T^*$.
In this case, the Gibbs measure $q^*$ behaves as a Boltzmann distribution over
transport plans, assigning higher probability to low-cost couplings.
As the temperature $\lambda$ decreases, $q^*$ becomes increasingly peaked around $T^*$,
and the free-energy bound becomes tight:
\begin{equation}
-\lambda F_{f,P_0} \;\to\; \mathcal{F}_\lambda(Y,W,T^*).
\end{equation}
\end{proof}

\subsection{Proof of Theorem~\ref{thm:estep-factorizes}}\label{apx:proof-estep}
\begin{proof}
From Lemma~\ref{lemma:Fenchel}, the optimal variational distribution, when $
f(T) = -\tfrac{1}{\lambda}\langle C,T\rangle_\mu$, is
\begin{equation}
q^*(T)
= \frac{e^{-\frac{1}{\lambda}\langle C,T\rangle_\mu} P_0(T)}%
{\E_{T\sim P_0}[e^{-\frac{1}{\lambda}\langle C,T\rangle_\mu}]}.
\end{equation}
By assumption, $P_0$ factorizes over $x\in\R^d$ with local components $P_{0,x}$,
that is,
\begin{equation}
P_0(T) = \prod_{x\in\R^d} P_{0,x}(T(x)),
\end{equation}
where each $T(x)\in\Delta_k$.
Since the exponential term also factorizes across $x$ as
\begin{equation}
e^{-\frac{1}{\lambda}\langle C,T\rangle_\mu}
= \prod_{x\in\R^d} e^{-\frac{1}{\lambda}\sum_{j=1}^k C_j(x)T_j(x)},
\end{equation}
this together implies that $q^*$ inherits
the same factorization:
\begin{equation}
    \begin{aligned}
        q^*(T)
&= \prod_{x\in\R^d} q^*_x(T(x));\\
q^*_x(T(x))& \propto P_{0,x}(T(x))\, e^{-\frac{1}{\lambda}\sum_{j=1}^k C_j(x)T_j(x)}.
    \end{aligned}
\end{equation}
To verify that it is indeed a sufficient condition, the Kullback–Leibler divergence decomposes additively:
\begin{equation}
\KL(q^*\|P_0)
=\sum_{x\in\R^d}\KL(q^*_x\|P_{0,x})
=\E_{q^*}\!\Big[\KL(\pi_T\|\mu\otimes\nu_k)\Big].
\end{equation}
Finally, if $P_0$ factorizes with weights $\omega=(\omega_1,\dots,\omega_k)$ such that
$P_{0,x}(T(x)) = \sum_j \omega_j \,\delta_{e_j}(T(x))$, then each local factor $q^*_x$ is categorical with unnormalized weights
$\omega_j\, e^{-C_j(x)/\lambda}$.
Normalizing yields the softmax form:
\begin{equation}
\bar{T}_j(x)
= \frac{\omega_j \, e^{-C_j(x)/\lambda}}
       {\sum_{l=1}^k \omega_l \, e^{-C_l(x)/\lambda}},
\qquad j=1,\dots,k,
\end{equation}
which matches Equation \ref{eq:estep}.
\end{proof}
The Mean-Field assumption has a clear probabilistic and geometric interpretation:

\paragraph*{(A) Mean-field factorization and conditional independence.}
Assuming $P_0$ factorizes over $x$ imposes a \emph{mean-field approximation}:
each spatial location $x$ contributes independently to the transport plan.
Consequently, the optimal Gibbs posterior $q^*$ inherits the same factorized form,
decoupling the optimization problem across all $x$.
This drastically reduces complexity and aligns the free-energy functional
with a sum of independent local free energies.

\paragraph*{(B) Softmax as the variational expectation.}
The expected plan $\bar{T}(x)$ emerges as a \emph{vector-valued softmax},
analogous to the $E$-step in the EM algorithm or the soft assignment in
Sinkhorn-based clustering.
Each $\bar{T}_j(x)$ represents the posterior probability that mass at $x$
is transported to cluster $j$, weighted by the prior $\omega_j$
and penalized by the transport cost $C_j(x)$.
The temperature $\lambda$ controls the sharpness of this assignment:
as $\lambda\to0$, $\bar{T}(x)$ approaches a hard assignment,
recovering the deterministic optimal transport map.

\paragraph*{(C) Relation to variational inference.}
Under this factorization, minimizing the free-energy functional
\begin{equation}
-\lambda F_{f,P_0}
= \E_{T\sim q^*}[\langle C,T\rangle_\mu]
+ \lambda \E_{T\sim q^*}[\KL(\pi_T\|\mu\otimes\nu_k)]
\end{equation}
corresponds to performing an $E$-step in a mean-field variational EM scheme:
$q^*$ plays the role of the posterior over transport assignments,
and $\bar{T}$ acts as its expected sufficient statistic.

Theorem~\ref{thm:estep-factorizes} provides a rigorous link between
the free-energy formulation of regularized optimal transport and the classical
mean-field variational inference paradigm.
The softmax update in~\eqref{eq:estep} serves as the closed-form
$E$-step, bridging Gibbs variational principles, entropy-regularized transport,
and probabilistic clustering.

\subsection{Proof for Proposition~\ref{prop:descent-T}}\label{AP:Proof_Estep}
\begin{proof}This proof elaborates Proposition~\ref{prop:descent-T} from Section~\ref{gen_inst}. Let $\bar{T}^{(t+1)}$ be fixed from the E-step. Consider Sinkhorn Loss $\mathcal{F}_\lambda(Y, W, \bar{T}^{(t+1)}).$

For each cluster $j$, the M-step optimizes
\begin{equation}
\begin{aligned}
\langle c(x,y_j),\bar{T}^{(t+1)}_j \rangle_\mu
&= \int_{\mathbb{R}^d} \bar{T}^{(t+1)}_j(x) \, \|x - y_j\|^2 \, d\mu(x)
\end{aligned}
\end{equation}
with respect to $y_j$. Differentiating and setting the gradient to zero gives
\begin{equation}
\begin{aligned}
2 \Big( y_j \int \bar{T}^{(t+1)}_j(x) d\mu(x)
- \int x \, \bar{T}^{(t+1)}_j(x) \, d\mu(x) \Big)
&= 0,
\end{aligned}
\end{equation}
which yields the update of $Y^{(t+1)}$
\begin{equation}
y_j^{(t+1)} = \frac{\int x \, \bar{T}^{(t+1)}_j(x) \, d\mu(x)}{\int \bar{T}^{(t+1)}_j(x) \, d\mu(x)}.
\end{equation}

Similarly, the weight $\omega_j$ appears in the linear term of the KL divergence between the variational transport plan $\bar{T}^{(t+1)}$ and the discrete measure $\mu \otimes \nu_k$:
\begin{equation}
\mathrm{KL}(\pi_{\bar{T}^{(t+1)} }\,\|\, \mu \otimes \nu_k)
= \sum_{j=1}^k \int_{\mathbb{R}^d} \bar{T}^{(t+1)}_j(x)
\log \frac{\bar{T}^{(t+1)}_j(x)}{\omega_j} \, d\mu(x).
\end{equation}
Differentiating with respect to $\omega_j$ and setting the derivative to zero gives
\begin{equation}
\frac{\partial}{\partial \omega_j} \mathrm{KL}(\bar{T}^{(t+1)} \,\|\, \mu \otimes \nu_k)
= - \frac{1}{\omega_j} \int_{\mathbb{R}^d} \bar{T}^{(t+1)}_j(x) \, d\mu(x) = 0,
\end{equation}
which yields the closed-form update of $W:=\{w_j\}_{j=1}^k$. The total mass assigned to cluster $j$ for the next step is
\begin{equation}
\omega_j^{(t+1)} = \int_{\mathbb{R}^d} \bar{T}^{(t+1)}_j(x) \, d\mu(x).
\end{equation}

By definition, the M-step performs a block coordinate optimization over $Y$ and $W$:
\begin{align}
\mathcal{F}_\lambda(Y^{(t+1)}, W^{(t)}, \bar{T}^{(t+1)}) &\le \mathcal{F}_\lambda(Y^{(t)}, W^{(t)}, \bar{T}^{(t+1)}), \\
\mathcal{F}_\lambda(Y^{(t+1)}, W^{(t+1)}, \bar{T}^{(t+1)}) &\le \mathcal{F}_\lambda(Y^{(t+1)}, W^{(t)}, \bar{T}^{(t+1)}),
\end{align}
which immediately implies
\begin{equation}
\mathcal{F}_\lambda(Y^{(t+1)}, W^{(t+1)}, \bar{T}^{(t+1)}) \le \mathcal{F}_\lambda(Y^{(t)}, W^{(t)}, \bar{T}^{(t+1)}).
\end{equation}

Equality holds if and only if $(Y^{(t)}, W^{(t)})$ are already minimizers along both blocks, i.e., they satisfy the first-order optimality conditions:
\begin{equation}
\begin{aligned}
\nabla_Y \mathcal{F}_\lambda(Y^{(t)}, W^{(t)}, \bar{T}^{(t+1)}) &= 0, \\
\nabla_W \mathcal{F}_\lambda(Y^{(t)}, W^{(t)}, \bar{T}^{(t+1)}) &= 0.
\end{aligned}
\end{equation}

This completes the proof.
\end{proof}

\subsection{Proof of Proposition~\ref{prop:bounded-gradient}}\label{apx:proof-bounded-gradient}
\begin{proof}
Throughout this proof we use the squared--Euclidean cost
\(C_j(x)=\|x-y_j\|^2\) (i.e., \(p=2\)), consistent with the E--step and Alg.~1.
Recall the soft assignment from Eq.~\eqref{eq:estep}:
\begin{equation}
    \begin{aligned}
\bar{T}_j(x)
&=\frac{\omega_j\,e^{-C_j(x)/\lambda}}{\sum_{\ell=1}^k \omega_\ell\,e^{-C_\ell(x)/\lambda}}
=\frac{a_j(x)}{Z(x)},\\
a_j(x)&:=\omega_j e^{-\|x-y_j\|^2/\lambda},\quad
Z(x):=\sum_{\ell=1}^k a_\ell(x).
\end{aligned}
    \end{equation}
Since
\begin{equation}
\begin{aligned}
    \nabla_x a_j(x)&=a_j(x)\,\nabla_x\!\big(-\|x-y_j\|^2/\lambda\big)
=-(2/\lambda)\,a_j(x)\,(x-y_j);\\
\nabla_x Z(x)&=\sum_{\ell}\nabla_x a_\ell(x)=-(2/\lambda)\sum_{\ell}a_\ell(x)\,(x-y_\ell)   ,
\end{aligned}
\end{equation}
Applying the product and chain rules gives
\begin{equation}
\begin{aligned}
\nabla_x \bar{T}_j(x)
&=\frac{\nabla_x a_j}{Z}-\frac{a_j}{Z^2}\,\nabla_x Z=-\frac{2}{\lambda}\frac{a_j}{Z}(x-y_j)+\frac{2}{\lambda}\frac{a_j}{Z}\sum_{\ell=1}^k\frac{a_\ell}{Z}(x-y_\ell)
\\
&=\frac{2}{\lambda}\,\bar{T}_j(x)\sum_{\ell=1}^k \bar{T}_\ell(x)\,(y_j-y_\ell).
\end{aligned}
\end{equation}
This proves the stated gradient identity, i.e. equation~\ref{eq:bounded-gradient} in main manuscript. For the norm bound, let
\(D_Y:=\max_{r,s}\|y_r-y_s\|\). Then
\begin{equation}
\Big\|\sum_{\ell=1}^k \bar{T}_\ell(x)\,(y_j-y_\ell)\Big\|\le \sum_{\ell=1}^k \bar{T}_\ell(x)\,\|y_j-y_\ell\|\le D_Y\sum_{\ell=1}^k \bar{T}_\ell(x)=D_Y,
    \end{equation}
and hence \(\|\nabla_x \bar{T}_j(x)\|\le (2/\lambda)\,\bar{T}_j(x)\,D_Y\le 2D_Y/\lambda\).
Note the weights \(\omega\) enter only through \(\bar{T}\) and cancel in the
derivative formula; the bound therefore holds for non‑uniform \(W\).
\end{proof}

\subsection{Discussion for Stability}\label{AP:stable}
In this subsection we collect Lipschitz‑type stability properties of the soft assignment map
\begin{equation}
\bar{T}^{(\lambda)}_j(x;Y,W)=\frac{\omega_j\,\exp\!\big(-\|x-y_j\|^2/\lambda\big)}{\sum_{\ell=1}^k \omega_\ell\,\exp\!\big(-\|x-y_\ell\|^2/\lambda\big)}.
\end{equation}
For brevity we write \(\bar{T}_j(x)\) when the dependence on \((Y,W,\lambda)\) is clear.

\begin{theorem}[Lipschitz continuity in the data]\label{thm:lip-x-T}
For any \(x,x'\in\R^d\) and every \(j\in[k]\),
\begin{equation}
\big|\bar{T}_j(x)-\bar{T}_j(x')\big|
\ \le\ \frac{2D_Y}{\lambda}\,\|x-x'\|.
\end{equation}
\end{theorem}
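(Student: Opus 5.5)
The plan is to deduce Theorem~\ref{thm:lip-x-T} from the uniform gradient bound in Proposition~\ref{prop:bounded-gradient} via the mean value inequality along the segment joining $x$ and $x'$. With the squared-Euclidean cost $C_j(x)=\|x-y_j\|^2$ used in the E-step, each numerator $a_j(x)=\omega_j e^{-\|x-y_j\|^2/\lambda}$ is $C^\infty$ in $x$. The denominator $Z(x)=\sum_\ell a_\ell(x)$ is strictly positive whenever some $\omega_\ell>0$, which holds because $\sum_\ell\omega_\ell=1$. Therefore $\bar T_j$ is $C^\infty$ on all of $\R^d$, and the fundamental theorem of calculus applies along any line segment.

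Fix $x,x'\in\R^d$ and define $\gamma(s)=x'+s(x-x')$ for $s\in[0,1]$, together with $g(s)=\bar T_j(\gamma(s))$. The key steps are as follows.
\begin{enumerate}
\item By the chain rule, $g'(s)=\langle\nabla_x\bar T_j(\gamma(s)),\,x-x'\rangle$.
\item Integrating, $\bar T_j(x)-\bar T_j(x')=\int_0^1 g'(s)\,ds$.
\item Cauchy--Schwarz inside the integral gives
\[
|\bar T_j(x)-\bar T_j(x')|\le \sup_{z}\|\nabla_x\bar T_j(z)\|\,\|x-x'\|.
\]
\item Proposition~\ref{prop:bounded-gradient} bounds the supremum by $2D_Y/\lambda$ uniformly in $z$, which yields the claim.
\end{enumerate}
We work with the integral form of the mean value theorem rather than a pointwise version, since the scalar function $g$ makes this immediate and no vector-valued subtleties arise.

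The only point needing care is that the bound in Proposition~\ref{prop:bounded-gradient} must be \emph{uniform} in the evaluation point. It is: the identity
\[
\nabla_x\bar T_j=\tfrac{2}{\lambda}\bar T_j\sum_\ell\bar T_\ell(y_j-y_\ell)
\]
expresses the gradient as $2/\lambda$ times a sub-convex combination of the vectors $y_j-y_\ell$, with coefficients $\bar T_j\bar T_\ell$ that are nonnegative and sum to at most $1$. Its norm is therefore at most $2D_Y/\lambda$ independently of $z$, of the weights $W$, and of whether some $\omega_\ell$ vanish.

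If any step is an obstacle, it is confirming smoothness and positivity of $Z$ when some weights are zero. In that case the zero-weight atoms simply drop out of both $Z$ and the sum, and $D_Y$ over all atoms still dominates, so the bound is unaffected. A sharper constant $D_Y/(2\lambda)$ would follow from $\bar T_j(1-\bar T_j)\le 1/4$, but the stated bound is all that is needed.
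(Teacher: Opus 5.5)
Your proposal is correct and takes the same route as the paper, which also combines the uniform bound $\|\nabla_x \bar{T}_j\|\le 2D_Y/\lambda$ from Proposition~\ref{prop:bounded-gradient} with the mean-value theorem along the segment joining $x$ and $x'$. Your side remark about a sharper constant is also right: the $\ell=j$ term vanishes, so the gradient norm is at most $\frac{2}{\lambda}\bar{T}_j(1-\bar{T}_j)D_Y\le D_Y/(2\lambda)$.
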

\begin{proof}
By Proposition~\ref{prop:bounded-gradient}, \(\|\nabla_x \bar{T}_j(x)\|\le 2D_Y/\lambda\)
for all \(x\). Apply the mean‑value theorem along the segment
\(x(t)=x+t(x'-x)\), \(t\in[0,1]\), and integrate over $x$, which completes the proof.
\end{proof}

\begin{proposition}[Partial derivatives with respect to centers]\label{prop:partial-y}
Fix \(x\in\R^d\). For any \(j,k\in[k]\),
\begin{equation}
\partial_{y_k}\bar{T}_j(x)
=\frac{2}{\lambda}\,\bar{T}_j(x)\Big[(x-y_j)\,\mathbf{1}\{j=k\}
-\bar{T}_k(x)\,(x-y_k)\Big].
\end{equation}
Consequently, if \(x\) ranges in a compact set \(K\subset\R^d\), then for all \(x\in K\),
\begin{equation}
\big\|\partial_{y_k}\bar{T}_j(x)\big\|\ \le\ \frac{2}{\lambda}\,(R_X+R_Y),
\end{equation}
where \(R_X:=\sup_{x\in K}\|x\|\), \(R_Y:=\max_{1\le \ell\le k}\|y_\ell\|\).
\end{proposition}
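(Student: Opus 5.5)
The computation follows the proof of Proposition~\ref{prop:bounded-gradient}, with differentiation now in the center $y_k$ rather than in $x$. Write $\bar{T}_j(x)=a_j/Z$ with $a_\ell:=\omega_\ell e^{-\|x-y_\ell\|^2/\lambda}$ and $Z:=\sum_\ell a_\ell$. Since $\nabla_{y}\|x-y\|^2=-2(x-y)$, we get
\[
\partial_{y_k}a_\ell=\frac{2}{\lambda}\,a_\ell\,(x-y_\ell)\,\mathbf{1}\{\ell=k\}.
\]
In particular $\partial_{y_k}Z=\frac{2}{\lambda}a_k(x-y_k)$. The weights $\omega_\ell$ do not depend on $Y$, so they are treated as constants throughout.

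Next apply the quotient rule:
\[
\partial_{y_k}\bar{T}_j=\frac{\partial_{y_k}a_j}{Z}-\frac{a_j}{Z^2}\,\partial_{y_k}Z .
\]
Substituting the derivatives above gives
\[
\frac{2}{\lambda}\Big[\bar{T}_j(x-y_j)\mathbf{1}\{j=k\}-\bar{T}_j\bar{T}_k(x-y_k)\Big],
\]
and factoring out $\bar{T}_j$ yields the stated identity. This step is routine. The one point to watch is the sign convention, which must match Proposition~\ref{prop:bounded-gradient}, and the fact that the index $k$ is overloaded: it denotes both the atom index and the number of atoms. I would state explicitly that only the atom index is meant here.

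For the bound, I would use the triangle inequality together with $0\le \bar{T}_\ell\le 1$. When $j\neq k$ only the second term is present, and it is at most $\frac{2}{\lambda}\bar{T}_j\bar{T}_k\|x-y_k\|\le \frac{2}{\lambda}(R_X+R_Y)$, since $\|x-y_k\|\le \|x\|+\|y_k\|$. When $j=k$ the bracket collapses to $\bar{T}_j(1-\bar{T}_j)(x-y_j)$. Its norm is at most $\frac14\|x-y_j\|\le R_X+R_Y$, so the bound again holds, with room to spare.

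The only potential obstacle is that a naive triangle inequality in the case $j=k$ would bound the bracket by $2\|x-y_j\|$, which loses a factor of $2$. Combining the two terms first, as above, avoids this and gives the stated constant $\frac{2}{\lambda}(R_X+R_Y)$. Uniformity over the compact set $K$ then follows from the definition of $R_X$.
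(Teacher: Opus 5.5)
Your proposal is correct and matches the paper's proof: the same decomposition $\bar{T}_j=a_j/Z$, the observation that only $a_k$ depends on $y_k$, and the quotient rule. Your bound is more careful than the paper's, which only cites $0\le\bar{T}_j\le 1$ and $\|x-y_\ell\|\le R_X+R_Y$; collapsing the $j=k$ case to $\bar{T}_j(1-\bar{T}_j)(x-y_j)$ is exactly what secures the constant $2/\lambda$ instead of $4/\lambda$.
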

\begin{proof}
Let $\bar{T}_j(x) = a_j(x)/Z(x)$, where
\[
a_j(x) = \omega_j \exp\!\big(-\|x - y_j\|^2 / \lambda\big),
\quad\text{and}\quad
Z(x) = \sum_{\ell} a_\ell(x).
\]
Observe that only the term $a_k$ depends on $y_k$. Differentiating $a_k$ with respect to $y_k$ gives
\[
\partial_{y_k} a_k(x)
= a_k(x)\, \partial_{y_k}\!\big(-\|x - y_k\|^2 / \lambda\big)
= \frac{2}{\lambda}\, a_k(x)\, (x - y_k).
\]
Consequently,
\[
\partial_{y_k} Z(x) = \frac{2}{\lambda}\, a_k(x)\, (x - y_k).
\]
Using the quotient rule for differentiation of $\bar{T}_j = a_j / Z$, we obtain
\[
\partial_{y_k} \bar{T}_j(x)
= \frac{Z(x)\,\partial_{y_k} a_j(x) - a_j(x)\,\partial_{y_k} Z(x)}{Z(x)^2}.
\]
Since $a_j$ depends on $y_k$ only when $j = k$, we have
\[
\partial_{y_k} a_j(x)
= \begin{cases}
\dfrac{2}{\lambda}\, a_k(x)\, (x - y_k), & \text{if } j = k, \\[6pt]
0, & \text{if } j \ne k.
\end{cases}
\]
Substituting these expressions yields the stated identity for $\partial_{y_k} \bar{T}_j(x)$.

Moreover, if $x \in K$, then $\|x - y_\ell\| \le R_X + R_Y$ for all $\ell$, and by construction each $\bar{T}_j(x)$ satisfies $0 \le \bar{T}_j(x) \le 1$.
These bounds, together with the expression for $\partial_{y_k} \bar{T}_j(x)$, yield the claimed uniform estimate.
\end{proof}

\begin{theorem}[Lipschitz continuity in the centers on compact domains]\label{thm:lipschitz-centres}
Let \(K\subset\R^d\) be compact, \(x\in K\), and \(Y'=\{y'_1,\ldots,y'_k\}\).
Then for every \(j\in[k]\),
\begin{equation}
\big|\bar{T}_j(x;Y,W)-\bar{T}_j(x;Y',W)\big|
\ \le\ \frac{2k}{\lambda}\,\big(R_X+R_Y\big)\,\Delta_Y,
\end{equation}
where \(\Delta_Y:=\max_{1\le \ell\le k}\|y_\ell-y'_\ell\|\).
\end{theorem}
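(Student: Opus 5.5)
The plan is to interpolate linearly between the two sets of centers and apply the fundamental theorem of calculus in the parameter, using the uniform partial-derivative bound of Proposition~\ref{prop:partial-y}. Fix $x\in K$ and $j\in[k]$. For $t\in[0,1]$ set $y_\ell(t):=y_\ell+t\,(y'_\ell-y_\ell)$ and $Y(t):=\{y_\ell(t)\}_{\ell=1}^k$, keeping $W$ fixed. Define $g(t):=\bar{T}_j(x;Y(t),W)$. Since $\bar{T}_j$ is a ratio of smooth positive exponentials with $Z>0$, $g$ is $C^1$ on $[0,1]$. Hence
\begin{equation*}
\bar{T}_j(x;Y',W)-\bar{T}_j(x;Y,W)=\int_0^1 g'(t)\,dt .
\end{equation*}

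By the chain rule,
\begin{equation*}
g'(t)=\sum_{\ell=1}^k \big\langle \partial_{y_\ell}\bar{T}_j(x;Y(t),W),\,y'_\ell-y_\ell\big\rangle .
\end{equation*}
Cauchy--Schwarz then gives
\begin{equation*}
|g'(t)|\le \sum_{\ell=1}^k \|\partial_{y_\ell}\bar{T}_j(x;Y(t),W)\|\,\Delta_Y .
\end{equation*}
Next I invoke Proposition~\ref{prop:partial-y} at the configuration $Y(t)$. That proposition gives $\|\partial_{y_\ell}\bar{T}_j\|\le \frac{2}{\lambda}\,\big(R_X+\max_m\|y_m(t)\|\big)$. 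It is worth re-checking this directly from the derivative formula. For $\ell=j$ the bracket reduces to $\bar{T}_j(1-\bar{T}_j)(x-y_j)$. For $\ell\neq j$ it is $-\bar{T}_j\bar{T}_\ell(x-y_\ell)$. In both cases the scalar prefactor is at most $1$, so the bound $\frac{2}{\lambda}\|x-y_\ell(t)\|\le\frac{2}{\lambda}(R_X+\|y_\ell(t)\|)$ holds. Summing the $k$ terms and integrating over $t$ yields the factor $\frac{2k}{\lambda}(R_X+\cdot)\,\Delta_Y$.

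The only genuine obstacle is that $R_Y$ in the statement is defined from $Y$ alone, whereas the derivative is evaluated along the path $Y(t)$. By convexity of the norm, $\|y_\ell(t)\|\le (1-t)\|y_\ell\|+t\|y'_\ell\|\le\max\{\|y_\ell\|,\|y'_\ell\|\}$. So the clean way to proceed is to read $R_Y$ as $\max_\ell\max\{\|y_\ell\|,\|y'_\ell\|\}$, i.e.\ a common radius bounding both center configurations; this is the intended meaning on a compact domain.

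If one insists on $R_Y$ from $Y$ only, I would instead use $\|y'_\ell\|\le R_Y+\Delta_Y$. This gives the slightly weaker bound $\frac{2k}{\lambda}(R_X+R_Y+\Delta_Y)\Delta_Y$, which reduces to the stated one up to a second-order term in $\Delta_Y$. I would state the result under the common-radius convention and remark on this alternative. Taking the absolute value of the integral and using $\int_0^1 dt=1$ then completes the proof.
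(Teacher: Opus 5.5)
Your proof is correct and follows the paper's own argument: interpolate linearly via $Y(t)$, apply the chain rule, bound each $\partial_{y_\ell}\bar{T}_j$ uniformly using Proposition~\ref{prop:partial-y}, and integrate over $t\in[0,1]$. Your caveat about $R_Y$ is well taken, because the paper simply asserts $\max_\ell\|y_\ell(t)\|\le R_Y$ along the path, which only holds if $R_Y$ bounds both $Y$ and $Y'$. With $R_Y$ taken from $Y$ alone, the stated inequality can fail (e.g.\ $K=\{0\}$ with all $y_\ell=0$ makes the right-hand side zero), so your common-radius reading, or your slightly weaker bound with $R_Y+\Delta_Y$, is exactly what is needed.
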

\begin{proof}
Consider the path \(Y(t)=\{y_\ell(t)\}\) with \(y_\ell(t)=y_\ell+t(y'_\ell-y_\ell)\), \(t\in[0,1]\).
By the chain rule and Proposition~\ref{prop:partial-y},
\begin{equation}
\frac{d}{dt}\bar{T}_j(x;Y(t),W)
=\sum_{\ell=1}^k \big\langle \partial_{y_\ell}\bar{T}_j(x;Y(t),W),\,y'_\ell-y_\ell\big\rangle.
\end{equation}
For each \(t\), the centers satisfy \(\max_\ell\|y_\ell(t)\|\le R_Y\), so
\(\|\partial_{y_\ell}\bar{T}_j(x;Y(t),W)\|\le \frac{2}{\lambda}(R_X+R_Y)\).
Hence,
\begin{equation}
\Big|\tfrac{d}{dt}\bar{T}_j(x;Y(t),W)\Big|
\le \frac{2}{\lambda}(R_X+R_Y)\sum_{\ell=1}^k \|y'_\ell-y_\ell\|
\le \frac{2k}{\lambda}(R_X+R_Y)\,\Delta_Y.
\end{equation}
Then integrate over \(t\in[0,1]\) for both sides of the inequality, which completes the proof.
\end{proof}

\paragraph*{Stability of the empirical M--step.}
Let \(X_{1:n}\subset K\) and define the sufficient statistics
\begin{equation}
\bar{T}_{ij}:=\bar{T}_j(X_i;Y,W),\qquad
S_j:=\sum_{i=1}^n \bar{T}_{ij},\qquad
N_j:=\sum_{i=1}^n \bar{T}_{ij}\,X_i.
\end{equation}
thus the M--step center update is \(y^{\text{new}}_j:=N_j/S_j\), as defined in Def.~\ref{def:M-step}.
Consider perturbing one sample \(X_k\mapsto X_k+h\) with \(\|h\|\le \delta\).
Let \(\Delta T_{kj}:=\bar{T}_j(X_k+h)-\bar{T}_j(X_k)\), \(\Delta S_j:=\Delta T_{kj}\),
and \(R_{j,\max}:=\max_{1\le i\le n}\|X_i-y^{\text{new}}_j\|\).

\begin{theorem}[Per-sample leverage bound]\label{thm:per-sample}
With the notation above, for any \(\delta\ge 0\),
\begin{equation}
\big\|y^{\text{new}}_j(X_k+h)-y^{\text{new}}_j(X_k)\big\|
\ \le\
\frac{\;\delta+\frac{2D_Y}{\lambda}\,\delta\,R_{j,\max}\;}
     {\,S_j-\frac{2D_Y}{\lambda}\,\delta\,}.
\end{equation}
Equivalently, writing \(p_j:=S_j/n\),
\begin{equation}
\big\|y^{\text{new}}_j(X_k+h)-y^{\text{new}}_j(X_k)\big\|
\ \le\
\frac{1}{\,n p_j-\frac{2D_Y}{\lambda}\,\delta\,}\,
\Big(1+\frac{2D_Y}{\lambda}\,R_{j,\max}\Big)\,\delta.
\end{equation}
In particular, if \(\delta\le \frac{\lambda}{4D_Y}\,S_j\) then
\begin{equation}
\big\|y^{\text{new}}_j(X_k+h)-y^{\text{new}}_j(X_k)\big\|
\ \le\
\frac{2}{n p_j}\,\Big(1+\frac{2D_Y}{\lambda}\,R_{j,\max}\Big)\,\delta.
\end{equation}
\end{theorem}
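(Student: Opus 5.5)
The plan is an exact algebraic identity for the perturbed center, followed by term-by-term bounds using the Lipschitz bound of Theorem~\ref{thm:lip-x-T}. Write $y:=y^{\text{new}}_j(X_k)=N_j/S_j$, with $S_j=\sum_i\bar{T}_{ij}$ the soft mass and $N_j=\sum_i \bar{T}_{ij}X_i$ the weighted sum. Following the setup of the theorem, I hold the assignment map $\bar{T}_j(\cdot;Y,W)$ fixed, i.e.\ $(Y,W)$ do not change. The perturbation $X_k\mapsto X_k+h$ then changes only the $k$-th summands:
\begin{equation*}
S'_j=S_j+\Delta T_{kj},\qquad N'_j=N_j+\Delta T_{kj}\,X_k+\bar{T}_j(X_k+h)\,h .
\end{equation*}
The key step is to subtract $y$ in a centered way. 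Using $N_j-yS_j=0$, I get
\begin{equation*}
y'-y=\frac{N'_j-y\,S'_j}{S'_j}=\frac{\Delta T_{kj}\,(X_k-y)+\bar{T}_j(X_k+h)\,h}{S_j+\Delta T_{kj}} .
\end{equation*}
This identity does all the work. Centering at $y$ is what produces $R_{j,\max}$ rather than the cruder $\|X_k\|$.

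Next I bound each piece. Theorem~\ref{thm:lip-x-T} gives $|\Delta T_{kj}|\le \frac{2D_Y}{\lambda}\|h\|\le\frac{2D_Y}{\lambda}\delta$. Since $y=y^{\text{new}}_j$ is the unperturbed center and $X_k$ is one of the original samples, $\|X_k-y\|\le R_{j,\max}$. Since $0\le\bar{T}_j\le 1$, the second numerator term is at most $\delta$. For the denominator, $S_j+\Delta T_{kj}\ge S_j-\frac{2D_Y}{\lambda}\delta$. Combining these yields the first displayed bound.

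The one point needing care is the sign of the denominator. The bound is only meaningful, and the division step only valid, when $S_j>\frac{2D_Y}{\lambda}\delta$. I would state this as the implicit standing condition; otherwise the right-hand side is read as $+\infty$ and the claim is vacuous.

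The remaining forms follow directly. Substituting $S_j=np_j$ gives the second form, after factoring $\delta$ out of the numerator. For the last claim, the hypothesis $\delta\le\frac{\lambda}{4D_Y}S_j$ gives $\frac{2D_Y}{\lambda}\delta\le S_j/2$. Hence the denominator is at least $S_j/2=np_j/2$, which produces the factor $2/(np_j)$. I expect no real obstacle beyond getting the centered identity right.
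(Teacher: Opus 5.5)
Your proof is correct and matches the paper's argument step for step. Both use the same centered identity $y'-y=\bigl(\bar T_j(X_k+h)\,h+\Delta T_{kj}(X_k-y)\bigr)/(S_j+\Delta T_{kj})$, then bound it with the Lipschitz estimate of Theorem~\ref{thm:lip-x-T}, $\bar T_j\le 1$, and $\|X_k-y\|\le R_{j,\max}$. Your caveat that the first two bounds need $S_j>\frac{2D_Y}{\lambda}\delta$ is a correct refinement: without it the stated right-hand side can be negative, which the paper's ``for any $\delta\ge 0$'' glosses over.
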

\begin{proof}
Only the \(k\)-th terms change, and
\begin{equation}
N_j' - N_j
=\bar{T}_j(X_k+h)\,(X_k+h)-\bar{T}_j(X_k)\,X_k
=\bar{T}_j(X_k+h)\,h+\Delta T_{kj}\,X_k.
\end{equation}
Since \(y^{\text{new}}_j=N_j/S_j\), one checks
\begin{equation}
y^{\text{new}}_j{}'-y^{\text{new}}_j
=\frac{S_j(N_j'-N_j)-(N_j)\Delta S_j}{S_j(S_j+\Delta S_j)}
=\frac{\bar{T}_j(X_k+h)\,h+\Delta T_{kj}\,(X_k-y^{\text{new}}_j)}{S_j+\Delta S_j}.
\end{equation}
Taking norms, using \(\bar{T}_j\le 1\), Theorem~\ref{thm:lip-x-T} to bound
\(|\Delta T_{kj}|\le (2D_Y/\lambda)\,\delta\), and
\(\|X_k-y^{\text{new}}_j\|\le R_{j,\max}\), gives the first inequality. The
remaining proof follow by substituting \(S_j=n p_j\) and the stated using small‑\(\delta\)
condition.
\end{proof}

\begin{remark}[Hard \(k\)-means lacks global Lipschitz continuity]
Let \(H_j(x;Y)=\mathbf{1}\{j=\arg\min_\ell\|x-y_\ell\|\}\) be the hard assignment.
Across a Voronoi boundary, \(H_j(\cdot;Y)\) jumps by \(\Theta(1)\) under arbitrarily
small perturbations of \(x\) or \(Y\); thus no finite global Lipschitz constant exists.
By contrast, for any fixed \(\lambda>0\), the soft assignments \(\bar{T}^{(\lambda)}\)
are globally Lipschitz in \(x\) (Theorem~\ref{thm:lip-x-T}) and locally Lipschitz in \(Y\)
(Theorem~\ref{thm:lipschitz-centres}) on compact data domains.
\end{remark}

\section{Extended Experimental Setup Details}\label{AP:experiments}

\subsection{Implementation Details}
\paragraph*{Hardware environment.} Experiments run on a workstation with an Intel Core i7-12700K CPU and an NVIDIA RTX 5090 GPU. Unless noted otherwise, we execute GPU-aware routines on the \texttt{cuda} backend and report wall-clock times measured on this system.
\paragraph*{Software environment.} All experiments are implemented on Ubuntu~22.04 with Python~3.11.5, PyTorch~2.7.1+cu128 (compiled against CUDA~12.8), and TorchVision~0.22.1+cu128. Supporting libraries include NumPy~1.26.2, SciPy~1.11.4, Pandas~2.1.1, Matplotlib~3.8.0, Seaborn~0.13.0, and scikit-learn~1.2.2.
\paragraph*{Hyperparameters.} Unless stated otherwise, all methods share a stopping rule of \texttt{max\_iter}$=1000$, a tolerance of \texttt{eps}$=10^{-2}$ on the Frobenius norm between successive centre updates, and a mini-batch size of \texttt{batch\_size}$=1000$. Centres are initialised by sampling from the dataset via the \texttt{random} initialiser. Gradient-based optimisation in WCSL uses the \texttt{Adam} optimiser with learning rate $10^{-1}$ and a \texttt{StepLR} scheduler that multiplies the learning rate by $0.99$ each epoch. Unless specified otherwise, the Sinkhorn regularisation parameter is fixed to $0.01$.
\paragraph*{Evaluation metrics.} We approximate entropic Wasserstein distances using Sinkhorn loss with entropy parameter $0.01$, squared Euclidean costs, numerical tolerance $10^{-6}$ for simulated data and $10^{-5}$ for image data, and up to $1000$ Sinkhorn iterations.
\paragraph*{Repetitions.} Each experiment is run with fixed random seeds for reproducibility, and we report the mean over multiple independent runs to mitigate variance.

\subsection{Experiment Details for Sec.~\ref{sec:distance}}
\paragraph*{Synthetic Gaussian data.} We generate $n=10^4$ points in $d=100$ from a Gaussian distribution whose covariance matrix is assembled from a randomly sampled orthonormal basis.
\paragraph*{MNIST benchmark.} We evaluate on the MNIST handwritten digit benchmark \citep{lecun2002gradient} using its standard train/test split.

\subsection{Experiment Details for Sec.~\ref{sec:shapley}}
\paragraph*{Goal and high-level protocol.}
This experiment evaluates whether a coreset $\nu_k$ preserves \emph{downstream interpretability} when used as the \emph{background distribution} required by SHAP explainers.
We train a predictive model $f$ on the training split, and then compute Shapley/SHAP attributions on the test split using either:
(i) the full training set as background (reference), or
(ii) a coreset of size $k{=}64$ constructed from the same training split (approximation).
We then report attribution reconstruction errors between these two SHAP outputs.

\paragraph*{Datasets, splits, and preprocessing.}
We follow the setup of \citet{jethani2022fastshap} using the \texttt{shap} package datasets:
\textbf{Census} (tabular classification) and \textbf{Diabetes} (tabular regression).
We use an 80/20 train/test split; for classification we use a stratified split, while for regression we use a random split.
All features are standardized using \texttt{StandardScaler} fitted on the training split and applied to both splits.

\paragraph*{Predictive model and SHAP explainer.}
On each dataset, we fit a gradient-boosted tree model (scikit-learn) on the 80\% training split.
Unless otherwise noted, we use $400$ estimators, learning rate $0.05$, and maximum depth $3$.
We compute attributions with \texttt{shap.TreeExplainer} \citep{lundberg2017unified}.
For Census, we explain a scalar output (e.g., the score/probability of the positive class); for Diabetes, we explain the regression output.
All methods share the same trained model; only the \emph{background distribution} differs.

\paragraph*{Shapley values with a background distribution.}
Let $x\in\mathbb{R}^d$ be a test point and let $\nu$ be a background distribution on $\mathbb{R}^d$.
Define the standard ``hybrid'' value function for a subset of features $S\subseteq[d]$ by
\begin{equation}
v(S; x,\nu)
:= \mathbb{E}_{y\sim \nu}\!\Big[f\big(x_S,\, y_{[d]\setminus S}\big)\Big],
\end{equation}
where $(x_S, y_{[d]\setminus S})$ denotes the vector obtained by taking features in $S$ from $x$ and the remaining features from a background draw $y$.
The Shapley value for feature $j$ is then
\begin{equation}
\label{eq:shapley-def-apx}
\phi_j(x;\nu)
=\!\!\sum_{S\subseteq[d]\setminus\{j\}}
\frac{|S|!\,(d-|S|-1)!}{d!}\,
\Big(v(S\cup\{j\};x,\nu)-v(S;x,\nu)\Big),
\end{equation}
which is the unique attribution satisfying efficiency, symmetry, dummy, and additivity \citep{shapley1953value}.
In practice, directly evaluating~\eqref{eq:shapley-def-apx} is combinatorial, and \texttt{TreeExplainer} provides an efficient computation specialized to tree ensembles \citep{lundberg2017unified}.

\paragraph*{Background distributions: full data vs.\ coreset.}
Let the training set be $\{x_i^{\mathrm{tr}}\}_{i=1}^{n_{\mathrm{tr}}}$ and define its empirical measure
\begin{equation}
\mu_{\mathrm{tr}}:=\frac{1}{n_{\mathrm{tr}}}\sum_{i=1}^{n_{\mathrm{tr}}}\delta_{x_i^{\mathrm{tr}}}.
\end{equation}
Each coreset method constructs a discrete weighted measure supported on $k$ atoms:
\begin{equation}
\nu_k := \sum_{m=1}^k \omega_m\,\delta_{y_m},
\qquad \omega_m\ge 0,\ \sum_{m=1}^k \omega_m=1.
\end{equation}
For EMS, both $\{y_m\}$ and $\{\omega_m\}$ are learned.
For K-means and WCSL, the support points are learned but weights are uniform ($\omega_m=1/k$).
For Random, we sample $k$ points from the training set with uniform weights.

With $\nu_k$ as background, the value function becomes a finite weighted average:
\begin{equation}
v(S;x,\nu_k)=\sum_{m=1}^k \omega_m\, f\big(x_S,\, y_{m,[d]\setminus S}\big).
\end{equation}
Consequently, Shapley values under $\nu_k$ decompose into weighted contributions from each atom:
\begin{equation}
\phi_j(x;\nu_k)
= \sum_{m=1}^k \omega_m\, \phi_j(x; y_m),
\end{equation}
where $\phi_j(x;y_m)$ denotes the Shapley value computed with a \emph{singleton} background distribution $\delta_{y_m}$ (equivalently, using $y_m$ to fill missing features in the hybrid evaluation).

\paragraph*{SHAP matrices and reconstruction metrics.}
Let the test set be $\{x_i^{\mathrm{te}}\}_{i=1}^{n_{\mathrm{te}}}$.
We form the SHAP attribution matrices
\begin{equation}
\Phi^{\mathrm{full}} \in \mathbb{R}^{n_{\mathrm{te}}\times d},
\qquad
\Phi^{\mathrm{coreset}} \in \mathbb{R}^{n_{\mathrm{te}}\times d},
\end{equation}
where $\Phi^{\mathrm{full}}_{ij}:=\phi_j(x_i^{\mathrm{te}};\mu_{\mathrm{tr}})$ and
$\Phi^{\mathrm{coreset}}_{ij}:=\phi_j(x_i^{\mathrm{te}};\nu_k)$.
We report three reconstruction errors:
\begin{align}
\mathrm{MeanAE}
&:= \frac{1}{n_{\mathrm{te}}d}\big\|\Phi^{\mathrm{coreset}}-\Phi^{\mathrm{full}}\big\|_1,\\
\mathrm{MaxAE}
&:= \big\|\Phi^{\mathrm{coreset}}-\Phi^{\mathrm{full}}\big\|_\infty,\\
\mathrm{BVE}
&:= \big|b(\nu_k)-b(\mu_{\mathrm{tr}})\big|,
\end{align}
where $\|\cdot\|_\infty$ denotes the elementwise maximum absolute deviation and
$b(\nu)$ is the SHAP \emph{base value} (background expected prediction),
estimated empirically as
\begin{equation}
b(\mu_{\mathrm{tr}})=\frac{1}{n_{\mathrm{tr}}}\sum_{i=1}^{n_{\mathrm{tr}}} f(x_i^{\mathrm{tr}}),
\qquad
b(\nu_k)=\sum_{m=1}^k \omega_m f(y_m).
\end{equation}
This base value is the constant in the SHAP decomposition $f(x)=b(\nu)+\sum_{j=1}^d \phi_j(x;\nu)$, so BVE captures how well the coreset preserves the explainer's reference level.

\paragraph*{Repetitions and randomness control.}
All reported numbers average over five runs with independent random seeds.
Across runs, randomness may arise from data splitting (when applicable) and/or coreset initialization (e.g., random initialization for K-means and EMS). See table \ref{tab:real_world} for the results.

\begin{table}[t]
\centering
\caption{Performance comparison on real-world datasets. Each entry reports distance / runtime (seconds).}
\label{tab:real_world}
\resizebox{\textwidth}{!}{
\begin{tabular}{llccccccc}
\toprule
\multirow{3}{*}{Dataset} &
\multirow{3}{*}{$k$} &
{Random} &
{K-means} &
{WCSL} &
\multicolumn{4}{c}{EMS} \\
\cmidrule(lr){3-9}
& & & &
$\lambda=10^{-2}$ &
$\lambda=0$ &
$\lambda=10^{-3}$ &
$\lambda=10^{-2}$ &
$\lambda=10^{-1}$ \\
& &
\multicolumn{1}{c}{distance / time} &
\multicolumn{1}{c}{distance / time} &
\multicolumn{1}{c}{distance / time} &
\multicolumn{1}{c}{distance / time} &
\multicolumn{1}{c}{distance / time} &
\multicolumn{1}{c}{distance / time} &
\multicolumn{1}{c}{distance / time} \\
\midrule
\multirow{5}{*}{\textbf{Transactions}}
& 20  & 74.47 / -- & 46.47 / 1.43 & \textbf{46.45} / 76.57 & 46.47 / \textbf{1.37} & 46.48 / 1.66 & 46.48 / 1.45 & 46.47 / 1.85 \\
& 50  & 68.85 / -- & 44.88 / \textbf{1.63} & \textbf{44.82} / 148.02 & 44.88 / 2.08 & 44.87 / 1.95 & 44.87 / 2.26 & 44.84 / 2.20 \\
& 100 & 65.65 / -- & 43.67 / \textbf{1.76} & \textbf{43.56} / 203.11 & 43.66 / 2.24 & 43.65 / 2.81 & 43.65 / 3.00 & 43.61 / 3.00 \\
& 200 & 63.13 / -- & 42.48 / 3.15 & \textbf{42.26} / 307.59 & 42.42 / \textbf{2.86} & 42.42 / 3.65 & 42.40 / 6.36 & 42.34 / 6.46 \\
& 500 & 59.54 / -- & 40.94 / 4.01 & \textbf{40.44} / 633.87 & 40.77 / \textbf{3.50} & 40.77 / 4.21 & 40.75 / 8.47 & 40.65 / 16.14 \\
\midrule
\multirow{5}{*}{\textbf{ChemReact100}}
 & 20  & 167.64 / -- & 104.38 / \textbf{0.065} & \textbf{94.80} / 44.82 & 96.71 / 0.079 & 97.41 / 0.081 & 96.93 / 0.108 & 97.25 / 0.104 \\
 & 50  & 149.77 / -- & 110.82 / \textbf{0.111} & \textbf{93.69} / 143.95 & 98.02 / 0.176 & 97.65 / 0.135 & 97.34 / 0.125 & 97.06 / 0.135 \\
 & 100 & 140.06 / -- & 108.89 / \textbf{0.207} & \textbf{94.07} / 224.75 & 97.80 / 0.214 & 97.83 / 0.221 & 97.94 / 0.245 & 97.95 / 0.285 \\
 & 200 & 131.66 / -- & 106.79 / 0.460 & \textbf{93.97} / 364.31 & 98.39 / \textbf{0.400} & 98.32 / 0.452 & 98.38 / 0.434 & 98.45 / 0.776 \\
 & 500 & 124.58 / -- & 104.74 / \textbf{0.522} & \textbf{93.67} / 689.74 & 99.09 / 0.579 & 99.05 / 0.639 & 99.08 / 0.792 & 99.15 / 1.45 \\
\midrule
\multirow{5}{*}{\textbf{Music}}
 & 20  & 467.49 / -- & 357.23 / 0.048 & 219.44 / 20.88 & \textbf{212.53} / 0.042 & 212.58 / \textbf{0.041} & 212.64 / 0.047 & 212.73 / 0.068 \\
 & 50  & 379.75 / -- & 403.59 / 0.081 & 214.71 / 51.60 & \textbf{205.09} / \textbf{0.071} & 205.62 / 0.104 & 205.77 / 0.098 & 205.12 / 0.125 \\
 & 100 & 352.66 / -- & 360.83 / 0.147 & 213.85 / 87.80 & \textbf{197.09} / 0.162 & 197.80 / \textbf{0.122} & 197.32 / 0.126 & 197.42 / 0.302 \\
 & 200 & 333.36 / -- & 405.85 / 0.254 & 205.08 / 155.39 & \textbf{190.86} / 0.251 & 191.42 / \textbf{0.200} & 191.50 / 0.201 & 191.29 / 0.560 \\
 & 500 & 303.75 / -- & 358.80 / 0.362 & 194.03 / 364.59 & \textbf{186.82} / 0.323 & 187.14 / 0.361 & 187.72 / 0.444 & 187.37 / 0.716 \\
\bottomrule
\end{tabular}}
\end{table}

\subsection{Limitations and Future Directions: Large-Scale Image Datasets}
\label{AP:limitations-images}

A limitation of this work is that we do not evaluate EMS on large-scale image datasets or image distillation tasks. This is by design. Our discrepancy is based on squared Euclidean geometry and entropic OT costs, which are well suited to tabular data and controlled synthetic settings, but pixel-wise similarity is generally misaligned with semantic similarity in images. As a result, Wasserstein or Sinkhorn objectives applied directly in pixel space often fail to produce meaningful visual or semantic summaries, even when transport costs are small.

A natural extension is to apply EMS in a learned representation space rather than in pixel space. Let $\psi(\cdot)$ denote a fixed or lightly tuned feature extractor, such as a pretrained vision encoder, and consider the pushforward measure $\mu_\psi := \psi_{\#}\mu$. Running EMS on $\mu_\psi$ yields a compact weighted measure
\[
\nu_k^\psi = \sum_{j=1}^k \omega_j \delta_{z_j},
\]
where the atoms $\{z_j\}$ summarize the dataset geometry in feature space. This connects closely to feature-space barycenter constructions, moment-matching objectives, and recent dataset distillation approaches based on Wasserstein-type criteria \citep{liu2025dataset}.

Two practical issues then arise: (i) how to realize feature-space atoms as images (e.g., via nearest-neighbor selection or by optimizing synthetic images under a perceptual loss with appropriate priors), and (ii) how to evaluate semantic fidelity using downstream performance or representation-level metrics rather than pixel-based distances. Addressing these issues would extend the applicability of EMS to modern vision tasks while preserving its core advantages of fast, stable, EM-style updates.


\end{document}